\pdfoutput=1
\documentclass[11pt]{article}

\usepackage[letterpaper,margin=1in]{geometry}
\usepackage[T1]{fontenc}
\usepackage{amsmath,amsthm,mathtools}
\IfFileExists{newtxtext.sty}{\usepackage{newtxtext,newtxmath}}{\usepackage{mathptmx}\usepackage{amssymb}}
\usepackage{microtype}
\usepackage{setspace}
\usepackage{booktabs,threeparttable,array,tabularx}
\usepackage{graphicx}
\usepackage{enumitem}
\usepackage{caption}
\usepackage{placeins}
\usepackage{natbib}
\usepackage{xcolor}
\usepackage[hidelinks]{hyperref}
\usepackage[nameinlink,noabbrev]{cleveref}

\graphicspath{{figures/}}
\setlist[itemize]{leftmargin=1.4em,itemsep=0.2em,topsep=0.3em}
\setlist[enumerate]{leftmargin=1.6em,itemsep=0.2em,topsep=0.3em}

\newtheorem{proposition}{Proposition}
\newtheorem{theorem}{Theorem}
\newtheorem{lemma}{Lemma}

\theoremstyle{definition}
\newtheorem{definition}{Definition}
\theoremstyle{remark}
\newtheorem{remark}{Remark}

\newcommand{\E}{\mathbb{E}}
\newcommand{\Prob}{\mathbb{P}}
\newcommand{\1}{\mathbf{1}}
\newcommand{\F}{\mathcal{F}}

\newcommand{\Loss}{\mathcal{L}}
\newcommand{\opt}{\mathrm{opt}}
\newcommand{\priv}{\mathrm{private}}
\newcommand{\cons}{\mathrm{consensus}}
\newcommand{\mkt}{\mathrm{market}}
\newcommand{\plan}{\mathrm{planner}}
\newcommand{\PoA}{\mathrm{PoA}}

\title{\textbf{The Shared Discovery Paradox}\\[0.35em]
\large How a One-Answer Rule Turns Better Information into Worse Search}
\author{Yohei Nakajima\thanks{Untapped Capital. Code, data, and an interactive guide: \texttt{github.com/yoheinakajima/shared-discovery-paradox}. All numerical claims in the paper are reproduced to stated precision by the accompanying verification script; see Appendix~D for the check list and claim-status taxonomy.}\\[2pt]
{\normalsize Untapped Capital}}
\date{July 2026}

\begin{document}
\maketitle
\vspace{-1.2em}

\begin{abstract}
Organizations often pool dispersed information into one ranking and then allow many agents to act on that shared view. In a discovery problem, this can improve beliefs while reducing coverage. We develop an exactly solvable benchmark with sixteen boxes, one target, eight searchers, and noisy private clues. Pooling raises the accuracy of the best single recommendation from $0.20$ to $0.3835$, but repeating that recommendation lowers group discovery from $0.8322$ under decentralized clue-following to $0.3835$. A coordinated eight-action portfolio using the same pooled reports reaches $0.8594$, and seven coordinated actions recover the decentralized benchmark. The paradox is not an information failure. It is a protocol failure: a one-answer rule compresses a portfolio of available actions into one repeated choice.

We then replace the planner with self-interested searchers who split a prize. The equal-split game is a potential game. Its anonymous symmetric equilibrium obeys a water-filling rule, and discovery equals $N$ times the common equilibrium payoff. In the canonical instance it achieves $0.5991$: strictly above consensus, but below both private search and the planner. The exact mixed price of anarchy is $2-1/N$, while uniform symmetric mixing yields the familiar $e/(e-1)$ limit. A sole-rescue reward, which pays only an agent who covers the target alone, makes every pure Nash equilibrium first-best. Finally, a latent common-cue model shows how correlated reports collapse effective discovery channels. The centralized planner gain rises strictly with copying, and in the canonical environment the symmetric market overtakes decentralized report-following at copying probability $c=0.788462$. In a proportional large-market limit the five-protocol ordering survives exactly: consensus discovery vanishes while blind, market, private, and portfolio search converge to $0.500$, $0.547$, $0.847$, and $0.874$. The contribution is a compact benchmark that separates information, allocation, incentives, and dependence into exact, reusable quantities.
\end{abstract}

\noindent\textbf{Keywords:} collective discovery, organizational search, consensus, congestion games, price of anarchy, information aggregation, correlated signals.\\
\textbf{JEL codes:} C72, D23, D83, O31.

\section{Introduction}

A group can become better informed and less likely to discover what it is looking for. The apparent contradiction arises when an institution solves an estimation problem and then uses the answer as if it had solved an allocation problem. Pooling information may identify the most likely target more accurately. Yet if many available actions are all assigned to that one target, the organization converts a portfolio of attempts into a repeated bet.

This paper isolates that mechanism in a deliberately small environment. Sixteen boxes contain one jackpot. Eight searchers receive noisy clues, each naming the correct box with probability $0.20$. If searchers independently follow their own clues, group discovery is $0.8322$. If they pool the clues, identify the posterior mode, and all follow it, the selected box is more accurate than any individual clue, but discovery falls to $0.3835$. If a coordinator instead assigns the same eight available actions to the eight highest-posterior boxes, discovery rises to $0.8594$. Even a coordinator who ignores the clues entirely and opens eight distinct boxes succeeds half the time: no information at all beats informed consensus. The same information supports both failure and improvement; the action protocol determines which one is realized.

Methodologically, the paper follows a familiar benchmark tradition. \citet{Akerlof1970} compresses quality uncertainty into a lemons market; \citet{Schelling1971} compresses local preferences into a checkerboard; and \citet{Arthur1994} compresses strategic crowding into the El Farol problem. Their power comes from making one mechanism visible in a toy environment that later work can extend, calibrate, or reject. Our narrower object is the separation between shared beliefs and diversified search. The comparison is methodological, not a claim of comparable scope. The aim is to make a recurring organizational failure measurable.

The paper develops four layers of the benchmark.

First, we separate the value of information from the quality of the action protocol. For an information set $\F$ and action budget $L$, the attainable frontier is the posterior mass of the best $L$-state portfolio. Protocol loss is the gap between that frontier and the set actually searched. In the canonical pooled-information environment, the entire $47.60$ percentage-point gap between consensus and the optimal eight-action portfolio is protocol loss. The resulting action-budget frontier yields an integer recovery budget: seven coordinated actions are sufficient to replace eight decentralized clue-followers.

Second, we ask whether decentralized incentives can supply the missing dispersion. After reports are pooled, $N$ self-interested searchers choose boxes and split a unit prize equally if several select the jackpot. This is a singleton congestion or covering game. It has an exact potential, and its anonymous symmetric equilibrium takes a water-filling form: posterior mass is traded off against expected prize dilution. The equilibrium improves on consensus at every posterior, but it can still collide excessively. Averaging over the canonical report distribution gives discovery $0.5991$ and only $2.67$ distinct searches in expectation. Thus the canonical ordering becomes
\[
G_{\cons}<G_{\mkt}^{\mathrm{sym}}<G_{\priv}<G_{\plan}.
\]
We sharpen the inefficiency result from a factor of two to the exact bound $2-1/N$ for all mixed Nash equilibria. This bound is known in the covering-game literature; the contribution here is a short search-specific proof and its integration into the Bayesian benchmark. Under a uniform posterior on $N$ boxes, anonymous symmetric mixing gives the familiar ratio converging to $e/(e-1)$, even though efficient asymmetric pure equilibria coexist.

Third, we identify a minimal incentive correction. Under a sole-rescue rule, the realized prize is paid only if exactly one searcher chose the target. The expected payoff to an agent is then exactly the posterior mass that the agent contributes uniquely to coverage. The resulting potential is social discovery itself. With at least as many boxes as searchers and positive posterior mass on every box, every pure Nash equilibrium assigns one searcher to each of the $N$ highest-posterior boxes. This is full pure-strategy implementation of the planner portfolio, not a dominant-strategy claim: equilibrium selection and mixed behavior remain separate issues.

Fourth, we introduce a latent common-cue copying model. Searchers may repeat the same unobserved source rather than draw independent clues. Copying reduces the number of effective discovery channels, can make raw report counts mis-rank posterior probabilities, and raises the value of reallocating duplicated capacity. We prove that the equal-budget centralized planner gain is strictly increasing in the copying probability. We then run the equal-split game on the herd-aware posterior. In the canonical environment, decentralized report-following initially dominates the symmetric market because independent clues create natural coverage. The ranking reverses at $c=0.788462$: beyond that point, channel collapse harms private report-following more than posterior concentration harms the dispersing market.

The ingredients have substantial precedents. Team theory, optimal search, parallel R\&D, organizational learning, information cascades, epistemic networks, potential games, and covering games each formalize part of the design problem \citep{MarschakRadner1972,Koopman1957,Nelson1961,March1991,Banerjee1992,BikhchandaniEtAl1992,Zollman2007,MondererShapley1996,Gairing2009}. We do not claim that communication-induced convergence, correlated signals, congestion-game inefficiency, or marginal-contribution incentives are individually new. The contribution is to place them in one exactly solvable Bayesian discovery benchmark and provide a common set of quantities: one-action value, portfolio value, protocol loss, recovery budget, effective channel count, anonymous-market value, price of anarchy, and rescue implementation.

\section{The sixteen-box benchmark}
\label{sec:benchmark}

There are $M=16$ boxes, exactly one of which contains a jackpot. The true box is $\theta\in\{1,\ldots,16\}$ and has a uniform prior. There are $N=8$ searchers. Conditional on $\theta$, searcher $i$ observes a clue $X_i$ satisfying
\[
\Prob(X_i=\theta\mid\theta)=p=0.20,
\qquad
\Prob(X_i=b\mid\theta)=q=\frac{1-p}{M-1}=\frac{0.8}{15}
\quad(b\neq\theta).
\]
The clues are conditionally independent in the baseline model. Each searcher can open one box, and the group succeeds if at least one opened box is the target.

\subsection{Three benchmark protocols}

\paragraph{Private clue-following.}
Each searcher opens the box named by the searcher's own clue. The group succeeds whenever at least one clue is correct, so
\begin{equation}
G_{\priv}=1-(1-p)^N=1-0.8^8=0.83222784.
\label{eq:private}
\end{equation}

\paragraph{Shared consensus.}
All reports are pooled and every action is assigned to the posterior mode. With a uniform prior and symmetric independent signals, posterior rank equals report-count rank, so consensus is plurality with uniform tie-breaking. Exact enumeration gives
\begin{equation}
G_{\cons}=0.38346871.
\label{eq:consensus}
\end{equation}
The recommendation is substantially better than one private clue, since $0.3835>0.20$. But eight identical actions cover one box, so recommendation accuracy and group discovery coincide.

\paragraph{Shared portfolio.}
A coordinator observes the same pooled reports and assigns actions to the $L$ highest-posterior boxes. With $L=8$,
\begin{equation}
G_{\plan}=0.85942125.
\label{eq:planner}
\end{equation}
The planner uses the information gain without compressing the action budget.

These values establish the basic paradox:
\[
0.20 < G_{\cons}=0.3835 < G_{\priv}=0.8322 < G_{\plan}=0.8594.
\]
Every participant can follow a better-informed recommendation while the group becomes less likely to discover the target.

\begin{definition}[Shared Discovery Paradox]
A Shared Discovery Paradox occurs when a shared-information protocol raises average action quality but lowers union success. If protocol $\Pi$ produces actions $A_1,\ldots,A_L$, define average action quality
\[
Q(\Pi)=\frac{1}{L}\sum_{\ell=1}^L \Prob(A_\ell=\theta)
\]
and discovery $G(\Pi)=\Prob(\theta\in\{A_1,\ldots,A_L\})$. The paradox is
\[
Q(\Pi^{S})>Q(\Pi^{P})
\quad\text{and}\quad
G(\Pi^{S})<G(\Pi^{P}).
\]
\end{definition}

In the canonical comparison, $Q(\Pi^{S})=0.3835>0.20=Q(\Pi^{P})$, while $G(\Pi^{S})=0.3835<0.8322=G(\Pi^{P})$.

Average action quality admits an exact accounting link to discovery. Let $K=\sum_{\ell=1}^{L}\1\{A_\ell=\theta\}$ be the number of actions that select the target. Then $LQ(\Pi)=\E[K]$ and $G(\Pi)=\Prob(K\ge1)$, so
\begin{equation}
LQ(\Pi)-G(\Pi)=\E\bigl[(K-1)^{+}\bigr].
\label{eq:redundancy}
\end{equation}
The right side is the expected number of redundant successful actions. Under consensus $K\in\{0,8\}$, so $8Q-G=7G=2.684281$: a correct recommendation produces eight identical hits. Under any collision-free portfolio at most one selected box can be the target, so the gap is zero. Private clue-following lies between the extremes, with $8Q-G=1.6-0.832228=0.767772$, because several clues can be simultaneously correct. The paradox is therefore a statement about duplication: a shared-information protocol can raise $Q$ precisely by concentrating $K$.

\paragraph{Blind-portfolio control.}
A blind control calibrates how expensive that duplication is. With a uniform prior and no clues at all, a coordinator who assigns the eight actions to eight distinct boxes succeeds with probability
\[
G_{\mathrm{blind}}=\frac{N}{M}=\frac{8}{16}=0.50.
\]
A completely uninformed but diversified portfolio therefore already outperforms informed consensus at $0.383469$. This is not a value-of-information comparison---the two protocols observe different things---but it makes the action-compression loss transparent: eight coordinated searches that ignore the clues entirely outperform eight informed searches that all repeat the pooled recommendation.\footnote{Even eight \emph{uncoordinated} blind guesses, drawn independently and uniformly, succeed with probability $1-(1-1/16)^{8}=0.403281>0.383469$.} We treat $0.50$ as a calibration line rather than a member of the institutional ordering, because blind search discards the available information.

\subsection{The full action-budget frontier}

Consensus evaluates pooled information at a one-action budget. Discovery is instead a budget-indexed frontier. Exact enumeration gives
\begin{center}
\begin{tabular}{c@{\qquad}c|c@{\qquad}c}
\toprule
Actions $L$ & Pooled discovery & Actions $L$ & Pooled discovery\\
\midrule
1 & 0.383469 & 5 & 0.735604\\
2 & 0.527532 & 6 & 0.794435\\
3 & 0.604528 & 7 & 0.836010\\
4 & 0.670581 & 8 & 0.859421\\
\bottomrule
\end{tabular}
\end{center}
The smallest coordinated budget that matches private clue-following is therefore
\[
L^*=7,
\qquad
G_{\mathrm{portfolio}}(7)=0.836010>0.832228=G_{\priv}.
\]

\begin{figure}[t]
\centering
\includegraphics[width=0.88\linewidth]{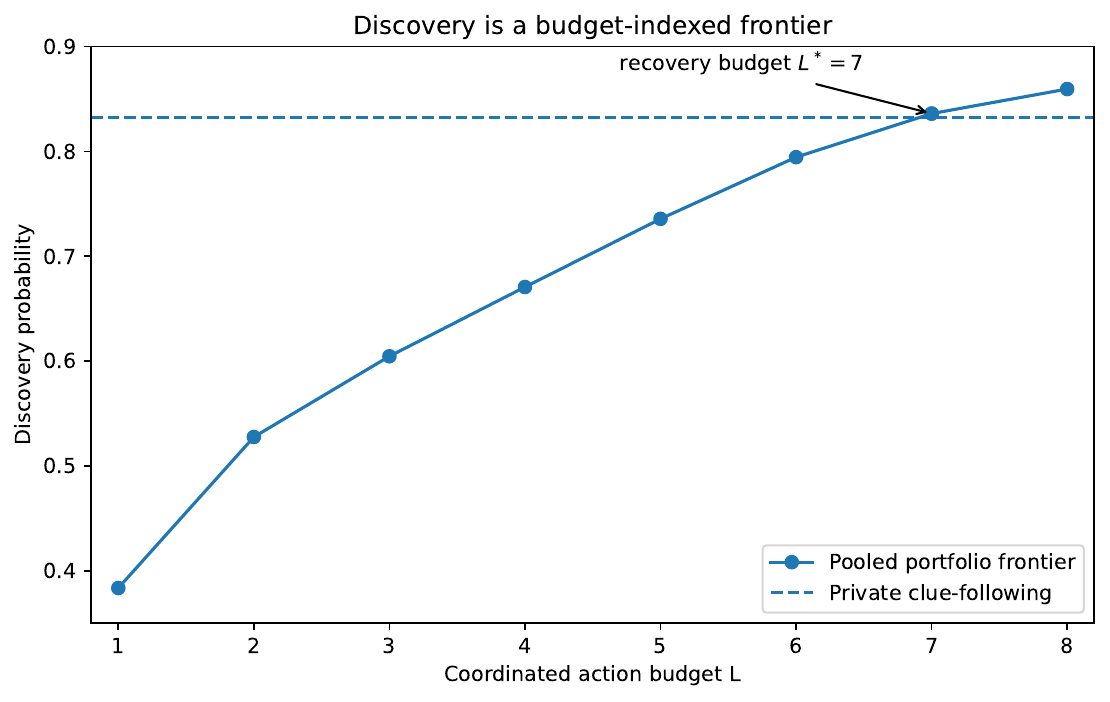}
\caption{The exact pooled-information frontier for the canonical benchmark. The one-action consensus value is only the left endpoint. Seven coordinated actions recover the performance of eight decentralized clue-followers.}
\label{fig:frontier}
\end{figure}

A second measure is the expected number of distinct actions. Private clue-following generates
\begin{equation}
D_{\priv}
=\bigl[1-(1-p)^N\bigr]+(M-1)\bigl[1-(1-q)^N\bigr]
=6.156850.
\label{eq:distinct-private}
\end{equation}
Consensus generates exactly one distinct action; the planner generates eight. These counts make the mechanism visible: pooling improves the ranking, while the protocol determines how much of the nominal action budget survives as coverage.

\section{Information value and protocol loss}
\label{sec:framework}

Let the target be $\theta\in\Omega$ with prior $\mu$. A decision architecture observes an information sigma-field $\F$. A protocol $\Pi$ maps its information into an action profile $(A_1,\ldots,A_L)$, where repeated actions are allowed. Let $S_\Pi=\{A_1,\ldots,A_L\}$ be the set of distinct searched states. Discovery is
\[
G_L(\Pi)=\Prob(\theta\in S_\Pi).
\]
For information $\F$ and budget $L$, define the attainable frontier
\begin{equation}
V_L(\F)=\E\left[\max_{S\subseteq\Omega,\ |S|\le L}
\Prob(\theta\in S\mid\F)\right].
\label{eq:frontier-general}
\end{equation}
In an atomic state space, the maximizing set contains the $L$ largest posterior probabilities. Define protocol loss
\begin{equation}
\Loss_L(\Pi;\F)=V_L(\F)-G_L(\Pi).
\label{eq:protocol-loss}
\end{equation}
Thus
\[
G_L(\Pi)=V_L(\F)-\Loss_L(\Pi;\F).
\]
This accounting identity separates the quality of available information from the efficiency with which an institution converts information and action capacity into coverage.

\begin{proposition}[Value of information]
\label{prop:voi}
If $\F_1\subseteq\F_2$, then $V_1(\F_2)\ge V_1(\F_1)$.
\end{proposition}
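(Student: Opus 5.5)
The plan is the standard Blackwell-type argument: the tower property combined with convexity of the maximum. With $L=1$, the frontier reduces to
\[
V_1(\F)=\E\bigl[\max_{\omega\in\Omega}\Prob(\theta=\omega\mid\F)\bigr],
\]
with singleton sets (the empty set contributes zero and can be ignored). For a general $L$, write $\max_{|S|\le L}\Prob(\theta\in S\mid\F)$ in the same way. The proof should not depend on $L=1$ specifically, so I would note that it works verbatim for every $L$.

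\textbf{Step 1 (tower property).} Fix any state $\omega$, or any candidate set $S$ with $|S|\le L$. Since $\F_1\subseteq\F_2$,
\[
\Prob(\theta\in S\mid\F_1)=\E\bigl[\Prob(\theta\in S\mid\F_2)\mid\F_1\bigr].
\]

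\textbf{Step 2 (pointwise domination).} For each fixed $S$, we have $\Prob(\theta\in S\mid\F_2)\le \max_{|S'|\le L}\Prob(\theta\in S'\mid\F_2)$. Taking $\E[\cdot\mid\F_1]$ of both sides gives
\[
\Prob(\theta\in S\mid\F_1)\le \E\Bigl[\max_{|S'|\le L}\Prob(\theta\in S'\mid\F_2)\Bigm|\F_1\Bigr]\quad\text{a.s.}
\]
Maximizing the left side over $S$ then yields
\[
\max_{S}\Prob(\theta\in S\mid\F_1)\le \E[\max_{S'}\Prob(\theta\in S'\mid\F_2)\mid\F_1].
\]
This step is where conditional Jensen enters: the maximum of conditional expectations is at most the conditional expectation of the maximum.

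\textbf{Step 3 (integration).} Taking unconditional expectations and applying the tower property once more gives $V_L(\F_1)\le V_L(\F_2)$, and in particular the claim for $L=1$.

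\textbf{Main obstacle.} The one delicate point is measure-theoretic. Swapping ``$\max_S$'' with an almost-sure inequality needs the maximum to range over a countable family of candidate sets, so that the union of the exceptional null sets is still null. I would handle this by noting that $\Omega$ is atomic (finite in the benchmark, countable in general). Then the family of sets with $|S|\le L$ is countable, the null sets can be unioned safely, and $\max$ becomes a $\sup$ that is attained, or approached, along a countable family. The resulting supremum is measurable, and the inequality survives.

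A further sanity check is an alternative route: a garbling/Blackwell representation, in which coarser information is a mean-preserving contraction of the posterior vector and the convex map $\pi\mapsto\max_\omega\pi_\omega$ has a larger expectation under the finer posterior. This is the same argument in different language, so I would present the direct conditional-expectation proof above.
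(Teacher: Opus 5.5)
Your proof is correct, but it follows a different route from the paper's. The paper argues by feasibility: every one-action rule measurable with respect to $\F_1$ is also measurable with respect to $\F_2$, so the optimal rule under the richer information cannot do worse. That is a one-line, decision-theoretic argument. However, it tacitly identifies $V_1(\F)$ with the value $\sup_A \Prob(A=\theta)$ over $\F$-measurable rules $A$, whereas the paper \emph{defines} $V_1(\F)$ in \eqref{eq:frontier-general} as an expected maximal posterior. To make that identification rigorous one must note two things. First, $\Prob(A=\theta)=\E\bigl[\Prob(\theta=A\mid\F)\bigr]\le V_1(\F)$ for every measurable rule. Second, a measurable selection of a posterior mode attains this bound. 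Your tower-property argument works directly from the posterior definition: it shows $\max_{|S|\le L}\Prob(\theta\in S\mid\F_1)\le\E\bigl[\max_{|S|\le L}\Prob(\theta\in S\mid\F_2)\bigm|\F_1\bigr]$ almost surely, which is the Blackwell convexity statement. So you never need to construct an optimal rule or invoke measurable selection. The cost is the null-set bookkeeping, which you handle correctly through countability of the candidate sets in an atomic $\Omega$. Both arguments extend verbatim to every budget $L$, which is how the paper later uses the result.
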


\begin{proof}
Every one-action rule measurable with respect to $\F_1$ remains feasible under $\F_2$. The optimal rule under the richer information cannot perform worse.
\end{proof}

\begin{proposition}[Fixed-budget optimality]
For every realization of $\F$, the posterior top-$L$ portfolio weakly dominates every protocol using at most $L$ distinct actions and the same realized information. Consequently, $V_L(\F)\ge G_L(\Pi)$.
\end{proposition}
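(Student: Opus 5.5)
The plan is to condition on the realized information and argue pointwise, then integrate. Fix a protocol $\Pi$ whose action profile $(A_1,\ldots,A_L)$ is $\F$-measurable, so the distinct searched set $S_\Pi=\{A_1,\ldots,A_L\}$ is an $\F$-measurable random set with $|S_\Pi|\le L$. The first step is the tower property:
\[
G_L(\Pi)=\Prob(\theta\in S_\Pi)=\E\bigl[\Prob(\theta\in S_\Pi\mid\F)\bigr].
\]
Because $S_\Pi$ is known given $\F$, the inner conditional probability equals $\sum_{s\in S_\Pi}\Prob(\theta=s\mid\F)$ in the atomic case. This is the posterior mass of a particular set of cardinality at most $L$.

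The second step is the pointwise comparison. For each realization, the posterior mass of any set $S$ with $|S|\le L$ is at most $\max_{|S'|\le L}\Prob(\theta\in S'\mid\F)$. By the remark following \eqref{eq:frontier-general}, this maximum is attained by the $L$ states with the largest posterior probabilities. The reason is an exchange argument: replacing any element of $S$ that is outside the top-$L$ set by an unused top-$L$ state never lowers the sum of posterior masses. Ties are broken arbitrarily without affecting the value. This gives the first claim, that the top-$L$ portfolio weakly dominates $\Pi$ realization by realization. Taking expectations of both sides and using monotonicity of expectation then yields $G_L(\Pi)\le V_L(\F)$, which is the same as $\Loss_L(\Pi;\F)\ge0$.

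The main obstacle is measurability, and I would address it in two places. First, the protocol must use only the information in $\F$: if $\Pi$ could condition on $\theta$ itself, the tower step would fail. I would therefore state explicitly that $\Pi$ is $\F$-measurable, allowing independent randomization. If the protocol randomizes, I would condition on $\F$ together with the randomization device. Since that device is independent of $\theta$, the posterior given $\F$ is unchanged, and the pointwise bound still applies. Second, the maximizer inside \eqref{eq:frontier-general} must be measurable. In the finite or atomic setting, I would handle this by choosing the top-$L$ set with a fixed deterministic tie-breaking rule. If the general non-atomic case is needed, I would interpret the expectation of the supremum via its essential supremum, which still dominates any particular $\F$-measurable choice.
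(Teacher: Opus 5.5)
Your argument is correct and is the paper's own proof written out in detail: $S_\Pi$ is a feasible set in the pointwise maximization defining $V_L(\F)$, the posterior top-$L$ set attains that maximum, and taking expectations (your tower-property step) gives $V_L(\F)\ge G_L(\Pi)$. The one thing to tighten is the randomized case: the device must be independent of $(\theta,\F)$ jointly, or conditionally independent of $\theta$ given $\F$, not merely independent of $\theta$, for conditioning on it to leave the posterior unchanged.
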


\begin{proof}
The set generated by $\Pi$ is one feasible set in the maximization defining $V_L(\F)$. The posterior top-$L$ set maximizes conditional target mass. Take expectations.
\end{proof}

In the canonical pooled-information regime,
\[
V_8(\F^S)=0.859421,
\qquad
\Loss_8(\Pi_{\cons};\F^S)=0.859421-0.383469=0.475952.
\]
Conditional on the same pooled reports and the same nominal budget, the full gap between consensus and the planner is protocol loss. The comparison with private clue-following changes both the information architecture and the assignment authority, so it is a combined organizational benchmark rather than a unique causal decomposition.

\subsection{Independent symmetric signals}
\label{sec:signals}

Let $C_b=\sum_i\1\{X_i=b\}$ be the report count for box $b$. Under the uniform prior,
\begin{equation}
\Prob(\theta=b\mid C)
\propto p^{C_b}q^{N-C_b}
=q^N\left(\frac{p}{q}\right)^{C_b}.
\label{eq:posterior-count}
\end{equation}
Because $p>q$, posterior rank equals count rank. Consensus is plurality; the optimal pooled portfolio is the set of boxes with the largest counts, with randomization at cutoff ties.

\begin{proposition}[Private coverage dominates one-action consensus]
In the symmetric independent atomic model,
\[
G_{\cons}\le G_{\priv}=1-(1-p)^N.
\]
\end{proposition}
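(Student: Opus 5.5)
The plan is a sample-path argument: on every realization, the consensus box succeeds only if it equals $\theta$, and this can happen only when some clue also names $\theta$. Write $B$ for the box selected by consensus, the plurality box with uniform tie-breaking. Then $G_{\cons}=\Prob(B=\theta)$. Also, $G_{\priv}=\Prob(\exists i: X_i=\theta)=1-(1-p)^N$ by \eqref{eq:private}. It therefore suffices to prove the event inclusion
\[
\{B=\theta\}\subseteq\{C_\theta\ge 1\},
\]
up to the randomization in tie-breaking, since $\{C_\theta\ge1\}$ is exactly the event that private clue-following succeeds.

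\textbf{Step 1: the selected box is always reported.} Since $\sum_b C_b=N\ge1$, the maximum count is at least one. By \eqref{eq:posterior-count} and $p>q$, the posterior mode is the set of boxes with maximal count. Hence every box in the mode set satisfies $C_b\ge1$. In particular, a uniformly tie-broken choice $B$ always has $C_B\ge1$.

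\textbf{Step 2: transfer to $\theta$.} On the event $\{B=\theta\}$ we get $C_\theta=C_B\ge1$, so at least one searcher's clue named $\theta$. That is exactly the private-success event. Taking probabilities (integrating over the tie-breaking randomization, which is independent of $\theta$ given $C$) gives $G_{\cons}\le \Prob(C_\theta\ge1)=G_{\priv}$.

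The main obstacle is only handling the tie-breaking randomness carefully. Step 1 covers it because every box in the tie set has a positive count. I would also remark that the inequality is strict whenever $N\ge2$. Indeed, the event where two distinct boxes each receive a report, one of them being $\theta$, and tie-breaking picks the wrong one has positive probability. This strictness is not needed for the stated weak inequality.
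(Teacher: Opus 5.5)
Your proof is correct and is essentially the paper's argument: the plurality (posterior-mode) box always carries at least one report, so the consensus-success event is contained in the event that some private clue names $\theta$. Your handling of tie-breaking just makes this realization-by-realization inclusion explicit. The strictness remark is a harmless extra (it needs $0<p<1$), and it is not required for the weak inequality.
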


\begin{proof}
The plurality winner has at least one report. Therefore the event that consensus selects the target is a subset of the event that at least one private report names the target.
\end{proof}

The inequality holds throughout the model, but the strict Shared Discovery Paradox additionally requires the pooled one-action value to exceed $p$. With two searchers and uniform tie-breaking, pooled plurality accuracy is exactly $p$, so $N=2$ is a coverage-compression boundary case rather than a strict paradox.

\begin{remark}[Partial conformity is monotonically harmful]
\label{rem:conformity}
A natural intermediate institution lets only $a$ of the $N$ searchers defer to the pooled recommendation while the remaining $N-a$ follow their own clues. This dial only goes down. The plurality winner always carries at least one report, so it already belongs to the privately proposed set; an adopter therefore adds nothing to coverage and merely deletes their own proposal. Formally, for nested adopter sets the searched set satisfies $S_a\subseteq S_{a-1}$ realization by realization, so $G(a)$ is weakly decreasing from $G(0)=G_{\priv}$ to $G(N)=G_{\cons}$; which searchers adopt is immaterial by exchangeability. The decline is strict whenever the displaced clue is uniquely held, correct, and differs from the realized winner, an event of positive probability at every step. Exact enumeration at the canonical parameters gives
\[
G(a)=0.8322,\;0.7913,\;0.7461,\;0.6967,\;0.6429,\;0.5848,\;0.5222,\;0.4551,\;0.3835
\qquad(a=0,1,\ldots,8):
\]
even the first adopter costs $4.1$ percentage points. Dispersion must be informed (private clues), priced (the market of \cref{sec:selfish}), or assigned (the planner); it cannot be obtained by dosing conformity.
\end{remark}

\subsection{The proportional sparse-evidence limit}
\label{sec:scaling}

The sixteen-box environment is deliberately small, so it is natural to ask whether the ordering is a small-number artifact. Scale the environment proportionally: let the number of searchers grow with the number of boxes, $N_M/M\to\alpha\in(0,1)$, and hold the clue likelihood ratio $r=p_M/q_M>1$ fixed, so that
\[
p_M=\frac{r}{M-1+r},
\qquad
q_M=\frac{1}{M-1+r}.
\]
The canonical benchmark corresponds to $\alpha=1/2$ and $r=3.75$. Fixing $r$ is one natural invariant among several---one could instead preserve mutual information---but it keeps the report-count occupancy problem nondegenerate: the target receives an asymptotically $\mathrm{Poisson}(\alpha r)$ number of reports, while a typical false box receives approximately $\mathrm{Poisson}(\alpha)$.

\begin{proposition}[Proportional sparse-evidence limit]
\label{prop:scaling}
In the independent symmetric model with $N_M/M\to\alpha\in(0,1)$ and fixed likelihood ratio $r>1$, consider the blind coordinated portfolio ($N_M$ distinct boxes chosen without clues), pooled one-action consensus, private clue-following, and the pooled top-$N_M$ portfolio. As $M\to\infty$,
\[
G_{\mathrm{blind}}\longrightarrow\alpha,
\qquad
G_{\cons}\longrightarrow 0,
\qquad
G_{\priv}\longrightarrow 1-e^{-\alpha r},
\qquad
G_{\mathrm{portfolio}}\longrightarrow 1-(1-\alpha)\,e^{-\alpha(r-1)}.
\]
The anonymous symmetric equal-split market on the pooled posterior also converges:
\begin{equation}
G_{\mkt}^{\mathrm{sym}}\longrightarrow \alpha\Lambda,
\qquad
\text{where $\Lambda$ uniquely solves}
\quad
\sum_{j\,:\,\psi_j>\Lambda} f_j\,h^{-1}\!\bigl(\Lambda/\psi_j\bigr)=\alpha,
\label{eq:market-limit}
\end{equation}
with $f_j=e^{-\alpha}\alpha^{j}/j!$, $\psi_j=r^{j}e^{-\alpha(r-1)}$, and $h(x)=(1-e^{-x})/x$ strictly decreasing from $h(0)=1$. Writing $x_j=h^{-1}(\Lambda/\psi_j)$ on active count classes and $g_j=f_j\psi_j=e^{-\alpha r}(\alpha r)^{j}/j!$, the equivalent tilted form is $G_{\mkt}^{\mathrm{sym}}\to\sum_j g_j\bigl(1-e^{-x_j}\bigr)$.
\end{proposition}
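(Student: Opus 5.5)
The plan is to reduce every protocol to a report-count occupancy problem and then pass to Poisson limits. The blind and private limits are immediate. For the blind portfolio, $G_{\mathrm{blind}}=N_M/M\to\alpha$. For private search, $G_{\priv}=1-(1-p_M)^{N_M}$, and since $N_Mp_M=N_Mr/(M-1+r)\to\alpha r$, we get $1-e^{-\alpha r}$.

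For the pooled protocols, condition on $\theta$. The target count $C_\theta\sim\mathrm{Bin}(N_M,p_M)\Rightarrow\mathrm{Poisson}(\alpha r)$. The remaining reports, $N_M-C_\theta$ of them, fall uniformly on the $M-1$ false boxes. Hence the empirical distribution of false-box counts converges in probability to $f_j=e^{-\alpha}\alpha^j/j!$. This is a standard balls-in-boxes law of large numbers, and I would prove it by a second-moment computation on $\#\{b\ne\theta:C_b=j\}/M$. Its fluctuations are $O(M^{-1/2})$.

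By \eqref{eq:posterior-count}, the posterior of box $b$ is $r^{C_b}/Z$ with $Z=\sum_{b'}r^{C_{b'}}$. The normalizer satisfies $Z/M\to\sum_j f_jr^j=e^{\alpha(r-1)}$, so $M\cdot\Prob(\theta=b\mid C)\to\psi_{C_b}$. This explains the definition of $\psi_j$. The extra target contribution to $Z$ is $O(1)$ and therefore negligible.

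\emph{Consensus.} There are $\Theta(M)$ false boxes, and the maximum false count grows like $\log M/\log\log M$. Meanwhile $C_\theta$ is tight. So the target is the plurality winner with vanishing probability, and $G_{\cons}\to0$.

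\emph{Portfolio.} Top-$N_M$ selects boxes in decreasing count order until mass $\alpha M$ is used. It therefore takes all count classes above a threshold $j^*$ and a fraction of class $j^*$. Success given $\theta$ occurs when $C_\theta$ beats the false boxes at the cutoff, with randomization at ties.

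Rather than tracking $j^*$ explicitly, I would use the tilting identity: $G=\E[\text{posterior mass of chosen set}]$. This converges to the mass-weighted selection $\sum_j g_j\,\sigma_j$, where $g_j=f_j\psi_j$ and $\sigma_j\in[0,1]$ is the selected fraction of class $j$, with $\sum_j f_j\sigma_j=\alpha$.

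This sum does not obviously collapse to $1-(1-\alpha)e^{-\alpha(r-1)}$ for general $r$. I expect to need a careful evaluation here, and I would first check the claimed closed form against the canonical numbers. If the closed form instead corresponds to a different tie-handling or ordering convention, I would identify that convention and verify the telescoping Poisson tail identity it implies.

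\emph{Market.} The market limit is where the real work lies, and I expect it to be the main obstacle. I would invoke the paper's water-filling characterization of the symmetric equilibrium. Each box with posterior $\pi_b$ gets mixing probability $y_b$, and active boxes equalize $\pi_b\,h_N(y_b)=\lambda$. Here $h_N(y)=(1-(1-y)^N)/(Ny)$ is the expected equal-split share. Discovery then equals $N\lambda$.

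Writing $y_b=x_b/N_M$ gives $h_N\to h(x)=(1-e^{-x})/x$ uniformly on compacts, and the constraint $\sum_b y_b=1$ becomes $\sum_b x_b=N_M$. Setting $\Lambda=\lim M\lambda$, which is consistent since $N\lambda=(N/M)(M\lambda)\to\alpha\Lambda$, the constraint divided by $M$ converges to $\sum_{j:\psi_j>\Lambda}f_jh^{-1}(\Lambda/\psi_j)=\alpha$.

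Uniqueness holds because the left side is continuous and strictly decreasing in $\Lambda$, goes to $\infty$ as $\Lambda\downarrow0$, and goes to $0$ for large $\Lambda$. The technical points are two. First, the finite-$M$ equation depends continuously on the empirical count distribution, so a stability argument (monotone equation plus LLN) gives $M\lambda\to\Lambda$ in probability. Second, the infinitely many count classes contribute negligibly by dominated convergence, since $f_j$ has Poisson tails.

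The tilted form follows by computing discovery directly as $\sum_b\pi_b(1-(1-y_b)^N)\to\sum_jg_j(1-e^{-x_j})$. It agrees with $\alpha\Lambda$ because $g_j(1-e^{-x_j})=f_j\psi_jx_jh(x_j)=\Lambda f_jx_j$, and $\sum_j f_jx_j=\alpha$. Bounded convergence then upgrades convergence in probability to convergence of expectations throughout.
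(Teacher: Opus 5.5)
Your blind, private, consensus, and market arguments match the paper's route in substance. The market step uses the same rescaling $x_b=N_Ms_b$, monotone limit budget equation, welfare identity $N\lambda$, and tilting identity $g_j(1-e^{-x_j})=\Lambda f_jx_j$.

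The genuine gap is the portfolio limit. You stop at $\sum_j g_j\sigma_j$, say it does not obviously collapse, and suggest the closed form might rest on another tie-handling or ordering convention. It does not. The missing observation is that there are only $N_M$ reports, so at most $N_M$ boxes are named. Since the posterior is proportional to $r^{C_b}$ with $r>1$, every named box strictly outranks every unnamed one. Hence the top-$N_M$ set always contains every named box, and your cutoff class is deterministically $j^*=0$.

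In your notation, $\sigma_j=1$ for all $j\ge1$. Because $\sum_{j\ge1}f_j=1-e^{-\alpha}<\alpha$, the budget forces $\sigma_0=(\alpha-1+e^{-\alpha})/e^{-\alpha}=1-(1-\alpha)e^{\alpha}\in(0,1)$. Then $\sum_jg_j\sigma_j=1-g_0(1-\sigma_0)=1-(1-\alpha)e^{-\alpha(r-1)}$, exactly the claimed value.

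More directly, the unselected boxes are exactly $M-N_M$ unnamed boxes, each with posterior $1/Z$. So $G_{\mathrm{portfolio}}=1-(M-N_M)\,\E[1/Z]$ holds exactly. Since $M/Z\in(0,1]$ and $Z/M\to e^{\alpha(r-1)}$, bounded convergence finishes the step.

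The paper uses the same fact through the fill identity $G_{\mathrm{portfolio}}=G_{\priv}+\E\bigl[\1\{C_\theta=0\}(N_M-D)/(M-D)\bigr]$, where $D$ is the number of distinct named boxes. It combines this with unweighted occupancy concentration $D/M\to1-e^{-\alpha}$. That route needs no posterior-weighted law of large numbers at all.

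A smaller point concerns your assertion $Z/M\to\sum_jf_jr^j$ and the high-count classes in the market budget. Class-by-class convergence of the empirical frequencies $\#\{b:C_b=j\}/M\to f_j$ is not enough by itself, because the weights $r^{j}$ grow exponentially and the sum is infinite. You need uniform integrability of $r^{C_b}$.

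The paper obtains this from second moments: $\E[r^{2C_b}]\to e^{\alpha(r^2-1)}$, with $O(1/M)$ covariances between distinct false boxes. It then controls the high-count part of the finite budget uniformly in $M$ via $h(x)\le 1/x$. That bound gives $N_Ms_b\le M\pi_b/\Lambda_M$ on active boxes, so the tail of the budget beyond count $J$ is at most about $\Lambda^{-1}\sum_{j\ge J}g_j$. Your appeal to the Poisson tails of $f_j$ is the right intuition but should be made precise in this way.
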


The proof is in \cref{app:scaling}. For the canonical scaling $\alpha=1/2$, $r=3.75$, the solution of \cref{eq:market-limit} is $\Lambda=1.094021$, and the complete limit ordering is
\[
G_{\cons}\to 0
\;<\;
G_{\mathrm{blind}}\to 0.50
\;<\;
G_{\mkt}^{\mathrm{sym}}\to 0.547011
\;<\;
G_{\priv}\to 0.846645
\;<\;
G_{\mathrm{portfolio}}\to 0.873580.
\]
Because $(1-\alpha)e^{\alpha}<1$ on $(0,1)$, the portfolio limit strictly exceeds the private limit at every $\alpha$. The market's position strictly above consensus and weakly below the portfolio also holds at every $(\alpha,r)$, since the finite inequalities pass to the limit. The positions of the blind benchmark relative to private search and to the market, by contrast, depend on $(\alpha,r)$, and the displayed ordering is a property of the canonical scaling.

The market limit is water-filling against a Poisson count profile: in the limit the equilibrium abandons every box with fewer than two reports (canonically $\psi_1=0.948<\Lambda<\psi_2=3.555$) and spreads over the high-count classes in which the target, whose own count is $\mathrm{Poisson}(\alpha r)$, is hidden. The finite canonical value $0.599099$ sits above its limit $0.547011$, and the seeded Monte Carlo diagnostic in the verification script shows a monotone decline through $M\in\{16,64,256,1024\}$ ($0.5991$, $0.5740$, $0.5582$, $0.5494$): as the posterior spreads over more comparably ranked boxes, symmetric mixing wastes more of its budget on collisions inside the leading count classes.

Two readings of the canonical limit deserve note. First, information without assignment and assignment without information deliver nearly the same discovery: the market reaches $0.547$ holding the full pooled posterior but no coordination device, the blind portfolio reaches $0.500$ holding a coordination device but no information, and only their combination reaches $0.874$. Second, the market--private gap at scale admits an exact accounting. Since $G_{\priv}\to\Prob\bigl(\mathrm{Poisson}(\alpha r)\ge1\bigr)$ and the canonical limit market abandons the zero- and one-report classes,
\[
G_{\priv}-G_{\mkt}^{\mathrm{sym}}
\;\longrightarrow\;
g_1+\sum_{j\ge2}g_j\,e^{-x_j}
=0.287541+0.012094:
\]
essentially the entire gap is the probability that the target receives exactly one report. The singleton discoveries that private clue-followers keep for free are precisely what an anonymous market rationally abandons; collisions inside the classes the market does cover cost only $0.012$.

\begin{remark}[Abundant-evidence regime]
\label{rem:abundant}
The paradox is a sparse-evidence phenomenon. Hold $p\in(0,1)$ fixed with $N_M/M\to\alpha\in(0,1)$, so that the likelihood ratio $r_M=p/q_M\to\infty$. Then $G_{\cons}\to1$, and a fortiori every clue-using protocol converges to one: the paradox vanishes in levels, and distinctions among diversified protocols persist only in convergence rates. The clean dial is $\lambda_M=N_Mp_M$, the target's expected report count: \cref{prop:scaling} is the bounded-$\lambda$ regime, while whenever $\lambda_M/\log M\to\infty$ the union bound in \cref{app:scaling} gives consistency. The intermediate window $\lambda_M\asymp\log M$ is a multiple-comparisons threshold problem we leave open. The benchmark is aimed accordingly: consensus institutions are adequate where evidence about the leading candidate is abundant; the compression loss binds in sparse-evidence search---frontier science, early-stage investing, novel-hypothesis generation---where the target's support never separates from the noise.
\end{remark}

The limit sharpens the finite benchmark in two ways. First, the paradox is not a small-number artifact: consensus discovery vanishes while every diversified protocol retains a strictly positive limit. Second, large-scale consensus fails through an additional channel. The target's report count remains stochastically bounded, but the maximum count among linearly many false boxes diverges, so the plurality winner is eventually an extreme false positive: top-one selection loses a multiple-comparisons contest that the portfolio, which retains every named box, never enters. The pooled ranking still contains substantial information; converting it into one repeated action is what discards it. The portfolio formula is the asymptotic form of the finite reallocation identity that reappears under copying in \cref{sec:copying}: the planner's entire limiting gain over private search, $e^{-\alpha r}\bigl[1-(1-\alpha)e^{\alpha}\bigr]$, comes from redirecting duplicated capacity to unnamed states on the event that every clue misses.

\section{Selfish search and the price of coordination}
\label{sec:selfish}

The planner benchmark assumes authority to assign distinct actions. We now retain the common pooled posterior but remove the assignment rule. Fix a posterior
\[
\pi=(\pi_1,\ldots,\pi_M),
\qquad \sum_b\pi_b=1.
\]
There are $N$ risk-neutral searchers. Each simultaneously chooses one box. If the jackpot is in box $b$ and $k$ searchers chose $b$, each occupant receives $1/k$; all other payoffs are zero. The unit prize is therefore split equally among successful searchers.

\subsection{Payoffs, potential, and total welfare}

Suppose every other searcher uses the same mixed strategy $s=(s_1,\ldots,s_M)$. If a searcher chooses box $b$, the number of other occupants is $K\sim\mathrm{Binomial}(N-1,s_b)$. The reciprocal-share identity is
\begin{equation}
\E\left[\frac{1}{K+1}\right]
=\sum_{j=0}^{N-1}\binom{N-1}{j}s_b^j(1-s_b)^{N-1-j}\frac{1}{j+1}
=\frac{1-(1-s_b)^N}{Ns_b}.
\label{eq:reciprocal}
\end{equation}
Define
\begin{equation}
\phi(s)=\frac{1-(1-s)^N}{Ns},
\qquad \phi(0)=1.
\label{eq:phi}
\end{equation}
The expected payoff from choosing $b$ is
\begin{equation}
u_b(s)=\pi_b\phi(s_b).
\label{eq:payoff}
\end{equation}
The function $\phi$ is strictly decreasing from $1$ to $1/N$. Posterior value attracts searchers; congestion dilutes the prize.

For a pure action profile $a$, let $n_b(a)$ be the number of searchers on box $b$, and let $H_k=\sum_{j=1}^k1/j$ with $H_0=0$.

\begin{proposition}[Potential and payout identity]
The equal-split game is an exact potential game with potential
\begin{equation}
\Phi(a)=\sum_{b=1}^M \pi_b H_{n_b(a)}.
\label{eq:harmonic-potential}
\end{equation}
At every realized action profile, total searcher payoff equals the discovery indicator. Conditional on $\pi$, expected total payoff therefore equals discovery probability.
\end{proposition}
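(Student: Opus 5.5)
The argument has two independent parts: an exact-potential computation for unilateral deviations, and a pointwise accounting of payouts. For the potential, I would work with \emph{expected} payoffs conditional on $\pi$. In a pure profile $a$, searcher $i$ on box $b=a_i$ has expected payoff $u_i(a)=\pi_b/n_b(a)$, because the prize sits in $b$ with probability $\pi_b$ and is then split among the $n_b(a)$ occupants.

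Now let $i$ deviate from box $b$ to box $b'\neq b$, giving profile $a'$. Only two occupancy counts move: $n_b$ falls by one and $n_{b'}$ rises by one. Write $n_b=n_b(a)$ and $n_{b'}=n_{b'}(a)$. The change in potential is
\[
\Phi(a')-\Phi(a)=\pi_{b'}\bigl(H_{n_{b'}+1}-H_{n_{b'}}\bigr)-\pi_b\bigl(H_{n_b}-H_{n_b-1}\bigr)=\frac{\pi_{b'}}{n_{b'}+1}-\frac{\pi_b}{n_b}.
\]
This matches the deviator's payoff change $u_i(a')-u_i(a)$ exactly. The convention $H_0=0$ makes the formula correct when the deviator leaves a singleton box ($n_b=1$) or enters an empty one ($n_{b'}=0$). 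The case $b'=b$ is trivial, so $\Phi$ is an exact potential in the sense of \citet{MonedererShapley1996}.

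For the payout identity I would fix both the realized state $\theta$ and the profile $a$. If $n_\theta(a)\ge1$, each of the $n_\theta(a)$ occupants of box $\theta$ receives $1/n_\theta(a)$, so the payments sum to $1$. If $n_\theta(a)=0$, nobody is paid. In both cases total payoff equals $\1\{\theta\in\{a_1,\ldots,a_N\}\}$, the discovery indicator. Taking expectations over $\theta\sim\pi$, and over any mixed strategies by linearity, shows that expected total payoff equals discovery probability conditional on $\pi$.

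No step here is hard. The only point needing care is the bookkeeping in the potential difference when $b$ becomes empty or $b'$ was empty, and it is handled by $H_0=0$ together with the telescoping identity $H_k-H_{k-1}=1/k$.
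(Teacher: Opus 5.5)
Your proposal is correct and matches the paper's proof essentially line for line: a unilateral move from $b$ to $b'$ changes both the deviator's payoff and $\Phi$ by $\pi_{b'}/(n_{b'}+1)-\pi_b/n_b$, and the payout identity holds because the occupants of the target box split exactly one unit while an unoccupied target pays nothing. Your explicit treatment of the $H_0=0$ boundary cases and of mixed profiles by linearity is a little more careful than the paper's terse version; note also that the citation key in your write-up is misspelled and should read \texttt{MondererShapley1996}.
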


\begin{proof}
If one player moves from a box with occupancy $k$ to a box with occupancy $\ell$, the player's payoff changes by $\pi_c/(\ell+1)-\pi_b/k$. The potential changes by exactly the same amount. For the payout identity, when the target box is occupied, its occupants split a unit prize and total payoff is one; otherwise total payoff is zero.
\end{proof}

Pure Nash equilibria exist by the potential argument \citep{Rosenthal1973,MondererShapley1996}. They need not be unique or symmetric. The benchmark below focuses on the anonymous symmetric equilibrium selected when ex ante identical searchers use the same mixed rule. This selection is not a convenience: efficient asymmetric equilibria require role labels, conventions, or sequencing---exactly the coordination devices whose presence or absence the paper treats as the institutional variable. Anonymity is the absence of coordination devices, so the anonymous equilibrium measures what dispersion prices alone can supply; the planner and sole-rescue benchmarks then measure what restoring those devices adds.

\subsection{The water-filling equilibrium}

Order posterior masses as $\pi_{(1)}\ge\pi_{(2)}\ge\cdots$. The endpoint $\pi_{(1)}/N$ matters because it is the payoff when everyone chooses the top box.

\begin{proposition}[Anonymous symmetric equilibrium]
\label{prop:waterfill}
The game has a unique anonymous symmetric equilibrium strategy $s$.

\begin{enumerate}[label=(\roman*)]
\item If $\pi_{(1)}/N\ge\pi_{(2)}$, every searcher chooses the top box and the common payoff is $\lambda=\pi_{(1)}/N$.
\item Otherwise there is a unique $\lambda\in(\pi_{(1)}/N,\pi_{(1)})$ such that
\begin{equation}
s_b=
\begin{cases}
\phi^{-1}(\lambda/\pi_b), & \pi_b>\lambda,\\
0, & \pi_b\le\lambda,
\end{cases}
\qquad
\sum_b s_b=1.
\label{eq:waterfill}
\end{equation}
Every box in the support yields expected payoff $\lambda$; every excluded box yields at most $\lambda$.
\end{enumerate}
\end{proposition}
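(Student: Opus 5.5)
The plan is to characterize symmetric equilibria through the indifference conditions implied by \cref{eq:payoff}, and then show there is exactly one solution. A symmetric mixed profile $s$ is an equilibrium if and only if there is a constant $\lambda$ with $\pi_b\phi(s_b)=\lambda$ whenever $s_b>0$ and $\pi_b\phi(0)=\pi_b\le\lambda$ whenever $s_b=0$. This follows because the payoff to a deviating searcher from box $b$ against the others' common strategy is exactly $u_b(s)$. I will first record the analytic facts about $\phi$: it is continuous and strictly decreasing on $[0,1]$ with $\phi(0)=1$ and $\phi(1)=1/N$. This can be seen from the representation $\phi(s)=\E[1/(K+1)]$ with $K\sim\mathrm{Binomial}(N-1,s)$, which is strictly stochastically increasing in $s$ for $N\ge2$, or by writing $\phi(s)=\frac1N\sum_{j=0}^{N-1}(1-s)^j$. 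Hence $\phi^{-1}:[1/N,1]\to[0,1]$ is continuous and strictly decreasing.

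Given any candidate level $\lambda$, the conditions force $s_b=0$ when $\pi_b\le\lambda$. When $\pi_b>\lambda$ they force $s_b=\phi^{-1}(\lambda/\pi_b)$, provided $\lambda/\pi_b\ge 1/N$; otherwise $s_b$ would exceed $1$, which is impossible, so the constraint $\lambda\ge\pi_{(1)}/N$ is automatic. Define $F(\lambda)=\sum_{b:\pi_b>\lambda}\phi^{-1}(\lambda/\pi_b)$ on $[\pi_{(1)}/N,\pi_{(1)}]$. Each term is continuous and strictly decreasing in $\lambda$ while active, and vanishes continuously as $\lambda\uparrow\pi_b$ since $\phi^{-1}(1)=0$. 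So $F$ is continuous and strictly decreasing where positive, with $F(\pi_{(1)})=0$. At the left endpoint, $F(\pi_{(1)}/N)=1+\sum_{b\ne(1):\,\pi_b>\pi_{(1)}/N}\phi^{-1}(\pi_{(1)}/(N\pi_b))$.

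For the case split, if $\pi_{(2)}\le\pi_{(1)}/N$ then $F(\pi_{(1)}/N)=1$, which gives case (i): $s$ is the point mass on the top box with $\lambda=\pi_{(1)}/N$, and every other box pays at most $\pi_{(2)}\le\lambda$. Any other $\lambda>\pi_{(1)}/N$ gives $F<1$, so this equilibrium is unique. If $\pi_{(2)}>\pi_{(1)}/N$, the extra term for the second box is strictly positive, so $F(\pi_{(1)}/N)>1>0=F(\pi_{(1)})$. The intermediate value theorem together with strict monotonicity then gives a unique $\lambda\in(\pi_{(1)}/N,\pi_{(1)})$ with $F(\lambda)=1$, which is \cref{eq:waterfill}. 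Ties at the top, where $\pi_{(1)}=\pi_{(2)}$, are handled by the same argument.

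The main obstacle is uniqueness across different supports. I must check that every symmetric equilibrium arises this way: for any equilibrium $s$ with common support payoff $\lambda$, the complementary conditions force both its support and its values to be exactly those generated by $\lambda$. Since $\sum_b s_b=1$ means $F(\lambda)=1$, strict monotonicity of $F$ on the relevant range pins down $\lambda$ and hence $s$. The delicate points are boxes with $\pi_b=\lambda$ exactly, which must have $s_b=0$ because $\phi(s_b)<1$ for $s_b>0$, and the boundary behavior of $\phi^{-1}$ at $1/N$. If the monotonicity of $\phi$ gives trouble, I would fall back on its polynomial form above, in which each summand $(1-s)^j$ is decreasing.
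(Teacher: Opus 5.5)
Your proposal is correct and follows essentially the same route as the paper: equalize payoffs $\pi_b\phi(s_b)=\lambda$ on the support, show that the implied total probability is continuous and strictly decreasing in $\lambda$ on $[\pi_{(1)}/N,\pi_{(1)}]$, apply the intermediate value theorem, and read the pure corner as the no-deviation condition from the top box to the second box. Your version also spells out details the paper leaves implicit: why $\lambda\ge\pi_{(1)}/N$ is forced, why boxes with $\pi_b=\lambda$ must be inactive, and why uniqueness holds across candidate supports, all tacitly for $N\ge2$ where $\phi$ is strictly decreasing.
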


\begin{proof}
In a symmetric equilibrium, used boxes must yield the same payoff. Since $\phi$ is continuous and strictly decreasing, an active box with $\pi_b>\lambda$ has the unique probability in \cref{eq:waterfill}; a box with $\pi_b\le\lambda$ cannot attain payoff $\lambda$ even when approached alone. In the non-pure case, the sum of implied probabilities is continuous and strictly decreasing in $\lambda$, exceeds one at $\pi_{(1)}/N$, and equals zero at $\pi_{(1)}$. The pure-corner condition is exactly the no-deviation condition from the top box to the second-best box.
\end{proof}

The rule is water-filling on the posterior. Searchers flood boxes above a common value threshold, but they do not impose a top-$N$ count cutoff and they still collide. This is the invisible hand's partial substitute for portfolio assignment.

\begin{proposition}[Welfare identity]
\label{prop:welfare}
At the anonymous symmetric equilibrium,
\begin{equation}
G_{\mkt}^{\mathrm{sym}}
=\sum_b\pi_b\bigl[1-(1-s_b)^N\bigr]
=N\lambda.
\label{eq:welfare-identity}
\end{equation}
Moreover, $G_{\mkt}^{\mathrm{sym}}\ge\max_b\pi_b$, with strict inequality unless the equilibrium is pure.
\end{proposition}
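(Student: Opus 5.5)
The plan is to prove the two equalities in \cref{eq:welfare-identity} separately, then derive the lower bound from the second one.

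\textbf{First equality.} Under the symmetric profile $s$, the $N$ choices are independent. Box $b$ is therefore occupied with probability $1-(1-s_b)^N$. Conditioning on $\theta=b$, which has posterior probability $\pi_b$ and is independent of the choices, and summing over $b$ gives discovery $\sum_b\pi_b[1-(1-s_b)^N]$.

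\textbf{Second equality.} This follows from the payout identity in the potential proposition. Total realized payoff equals the discovery indicator, so expected total payoff equals $G_{\mkt}^{\mathrm{sym}}$. By \cref{prop:waterfill}, each searcher mixes only over support boxes, and each of these yields expected payoff exactly $\lambda$. Each searcher's expected payoff is therefore $\lambda$, and by symmetry total expected payoff is $N\lambda$. As a direct check, $\pi_b[1-(1-s_b)^N]=N s_b\,\pi_b\phi(s_b)=Ns_b\lambda$ on the support, and summing with $\sum_b s_b=1$ gives $N\lambda$. This term-by-term form is the cleanest route.

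\textbf{Lower bound.} The key step is to show $N\lambda\ge\pi_{(1)}$.
\begin{itemize}
\item In the pure case (i), $N\lambda=\pi_{(1)}$ exactly.
\item In case (ii), \cref{prop:waterfill} gives $\lambda>\pi_{(1)}/N$, hence $N\lambda>\pi_{(1)}=\max_b\pi_b$ strictly.
\end{itemize}
The statement asks for strict inequality unless the equilibrium is pure, so this dichotomy suffices: equality occurs only in case (i).

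\textbf{Main obstacle.} The only delicate point is the strict inequality $\lambda>\pi_{(1)}/N$ in case (ii). To establish it independently of \cref{prop:waterfill}, note that the top box is active, since $\pi_{(1)}>\lambda$ there. Its payoff is $\lambda=\pi_{(1)}\phi(s_{(1)})$, and $\phi(x)>1/N$ for $x<1$. So it suffices that $s_{(1)}<1$, which holds because in case (ii) the support contains at least two boxes. Otherwise everyone would be on the top box, and the deviation payoff $\pi_{(2)}>\pi_{(1)}/N$ would contradict equilibrium.
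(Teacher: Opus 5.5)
Your proposal is correct and follows essentially the paper's route. The second equality is the same term-by-term computation $\pi_b[1-(1-s_b)^N]=Ns_b\lambda$, summed using $\sum_b s_b=1$. Your fallback strictness argument ($\phi(s)>1/N$ for $s<1$ at the active top box) is exactly the paper's inequality $1-(1-s)^N\ge s$ at the posterior mode, with equality only when $s_{b^*}=1$. Citing $\lambda>\pi_{(1)}/N$ from the water-filling case split is a legitimate shortcut to the same conclusion, and your check that the top box is active fills a step the paper leaves implicit.
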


\begin{proof}
For an active box, $\pi_b[1-(1-s_b)^N]=N s_b\pi_b\phi(s_b)=Ns_b\lambda$. Summing and using $\sum_bs_b=1$ gives $N\lambda$. If $b^*$ is a posterior mode, then
\[
N\lambda=\pi_{b^*}\frac{1-(1-s_{b^*})^N}{s_{b^*}}\ge\pi_{b^*},
\]
because $1-(1-s)^N\ge s$. Equality requires $s_{b^*}=1$.
\end{proof}

Thus equal sharing fixes the most extreme consensus pathology posterior by posterior: congestion makes repetition costly. But the game can still under-cover the posterior tail.

\subsection{The exact price of anarchy}

The planner assigns one searcher to each of the $N$ highest-posterior boxes, giving
\[
G_{\opt}=\sum_{b=1}^{\min\{N,M\}}\pi_{(b)}.
\]
Following \citet{KoutsoupiasPapadimitriou1999}, define $\PoA=G_{\opt}/G_{\mathrm{eq}}$ for the worst equilibrium. The following sharp bound is a specialization of the uniform-sharing covering-game result in \citet{Gairing2009}; a direct proof is short enough to include.

\begin{theorem}[Exact mixed price of anarchy]
\label{thm:poa}
For every posterior $\pi$ and every $N$-searcher equal-split game,
\begin{equation}
\PoA\le 2-\frac{1}{N}.
\label{eq:poa}
\end{equation}
The bound applies to mixed Nash equilibria and is tight.
\end{theorem}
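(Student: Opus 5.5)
The plan is a smoothness-type argument run realization by realization, followed by a two-level example for tightness. Fix a mixed Nash equilibrium $\sigma$ with random occupancies $n_b$, and let $O$ be the planner's set of the $\min\{N,M\}$ highest-posterior boxes, labelled so that searcher $i$ is matched to a distinct box $o_i\in O$. By the payout identity of the potential proposition, $G_{\mathrm{eq}}=\sum_i \E[u_i(\sigma)]=\E\bigl[\sum_b\pi_b\1\{n_b\ge1\}\bigr]$. The equilibrium condition gives, for each $i$,
\[
\E[u_i(\sigma)]\ \ge\ \pi_{o_i}\,\E\Bigl[\tfrac{1}{n^{-i}_{o_i}+1}\Bigr],
\]
where $n^{-i}_b$ counts searchers other than $i$ on $b$. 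Because strategies are independent, $n^{-i}_b$ has the same law whether or not $i$ deviates. Summing over $i$ yields
\[
G_{\mathrm{eq}}\ \ge\ \sum_{b\in O}\pi_b\,\E\bigl[1/(n^{-i(b)}_b+1)\bigr].
\]

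The second step bounds what this lower bound fails to capture. For each $b\in O$ I will split
\[
\pi_b=\frac{\pi_b}{n^{-i}_b+1}+\pi_b\,\frac{n^{-i}_b}{n^{-i}_b+1}.
\]
The first pieces sum, in expectation, to at most $G_{\mathrm{eq}}$ by the display above. For the second piece, note that it is nonzero only when $b$ is occupied by someone other than $i$, so it is dominated by $\pi_b\1\{n_b\ge1\}$. Since $n^{-i}_b\le N-1$, the factor $n^{-i}_b/(n^{-i}_b+1)$ is at most $(N-1)/N=1-1/N$. Hence
\[
\sum_{b\in O}\pi_b\,\E\Bigl[\tfrac{n^{-i}_b}{n^{-i}_b+1}\Bigr]\ \le\ \Bigl(1-\tfrac1N\Bigr)\E\Bigl[\sum_b\pi_b\1\{n_b\ge1\}\Bigr]=\Bigl(1-\tfrac1N\Bigr)G_{\mathrm{eq}}.
\]
Adding the two pieces gives $G_{\opt}\le (2-1/N)G_{\mathrm{eq}}$.

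The step I expect to be delicate is exactly this bookkeeping with $n^{-i}_b$ rather than $n_b$. The crude bound with $n_b\le N$ gives only $N/(N+1)$, and that route also mishandles the case where $i$ itself sits on $o_i$. Both problems are avoided because excluding the deviator caps the other occupants at $N-1$. One must still check that the second piece is bounded by occupancy of $b$ in the equilibrium profile. This holds because $n^{-i}_b\ge1$ implies $n_b\ge1$, which makes the inequality pointwise, and because distinct $i$'s are matched to distinct boxes, so no box of $O$ is counted twice.

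For tightness, I will take one box with mass $x$ and $N-1$ further boxes each with mass $y=x/N$, rescaling the remaining mass, for example by spreading it over negligibly small boxes or setting $M=N$ and normalizing. Everyone choosing the top box is then a (pure, hence mixed) equilibrium by case (i) of \cref{prop:waterfill}, since $x/N\ge y$, and it yields $G_{\mathrm{eq}}=x$. The planner instead obtains $x+(N-1)x/N$, so the ratio is exactly $2-1/N$. Any residual tail mass must be made negligible or absorbed into the construction so that the ratio is attained rather than merely approached. If the normalization forces an approximate limit, I will present tightness as a supremum attained in the limit, or choose $M=N$ with $x(1+(N-1)/N)=1$ to attain it exactly.
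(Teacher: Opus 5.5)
Your proof is correct and is essentially the paper's argument: your exact split $\pi_b=\pi_b/(n^{-i}_b+1)+\pi_b\,n^{-i}_b/(n^{-i}_b+1)$ yields the same per-realization smoothness inequality $\pi_{o_i}\le(1-1/N)\pi_{o_i}\1\{o_i\text{ occupied}\}+u_i(o_i,a_{-i})$ that the paper sums over a matched optimal set before invoking the mixed-equilibrium deviation condition. Your tightness instance ($M=N$, one box of mass $N/(2N-1)$ and $N-1$ boxes of mass $1/(2N-1)$) differs only cosmetically from the paper's decoy of mass $1/2$ with $N$ tail boxes of mass $1/(2N)$. Your hedging about residual mass is unnecessary, since adding boxes of mass at most $x/N$ keeps the all-on-top profile an equilibrium and leaves the ratio exactly $2-1/N$.
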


\begin{proof}
Let $O=\{o_1,\ldots,o_N\}$ be an optimal set, assigning one optimal box $o_i$ to each searcher $i$ (ignore zero-mass padding if $M<N$). For any realized action profile $a$, write $W(a)$ for covered posterior mass. For each $o_i$,
\begin{equation}
\pi_{o_i}
\le \left(1-\frac{1}{N}\right)\pi_{o_i}\1\{o_i\text{ is occupied in }a\}
+u_i(o_i,a_{-i}).
\label{eq:smooth-step}
\end{equation}
If $o_i$ is empty, deviating to it gives $\pi_{o_i}$. If it is occupied, deviating gives at least $\pi_{o_i}/N$, which completes the right-hand side. Summing \cref{eq:smooth-step} over $i$ gives
\[
G_{\opt}\le\left(1-\frac{1}{N}\right)W(a)+\sum_i u_i(o_i,a_{-i}),
\]
because the occupied optimal boxes contribute at most $W(a)$. At a mixed Nash equilibrium, each player's expected current payoff is at least the payoff from the fixed deviation to $o_i$. Total expected current payoff equals $\E[W(a)]$. Hence
\[
G_{\opt}\le\left(2-\frac{1}{N}\right)\E[W(a)].
\]

For tightness, let one ``decoy'' box have posterior mass $1/2$ and let $N$ tail boxes each have mass $1/(2N)$. The profile in which all searchers choose the decoy is a Nash equilibrium: each receives $1/(2N)$ and is indifferent to deviating to an empty tail box. Equilibrium discovery is $1/2$, while the planner covers the decoy and $N-1$ tail boxes, giving $1-1/(2N)$. The ratio is $2-1/N$.
\end{proof}

The theorem distinguishes two questions. A posterior may possess an efficient pure equilibrium and an inefficient anonymous mixed equilibrium at the same time. The price of anarchy controls the worst strategic outcome; the symmetric benchmark measures what happens under anonymous, uncoordinated mixing.

The proof is a smoothness argument in the sense of \citet{Roughgarden2015}: inequality \cref{eq:smooth-step} establishes that the game is $(1,\,1-1/N)$-smooth, so the extension theorems for robust price-of-anarchy bounds apply verbatim. The bound $2-1/N$ therefore holds not only for mixed Nash equilibria but for all coarse correlated equilibria and for the average play of any no-regret learning dynamics---agents need not compute an equilibrium for the guarantee to bind.

\paragraph{Uniform posterior.}
If posterior mass is uniform on $T$ boxes, the anonymous symmetric equilibrium is $s_b=1/T$ and
\begin{equation}
G_{\mkt}^{\mathrm{sym}}=1-\left(1-\frac{1}{T}\right)^N,
\qquad
G_{\opt}=\frac{\min\{N,T\}}{T}.
\label{eq:uniform}
\end{equation}
When $T=N$,
\begin{equation}
\frac{G_{\opt}}{G_{\mkt}^{\mathrm{sym}}}
=\frac{1}{1-(1-1/N)^N}
\longrightarrow \frac{e}{e-1}\approx1.582.
\label{eq:econstant}
\end{equation}
For $N=8$, the ratio is $1.523482$. An efficient pure equilibrium placing one searcher on each box also exists. The $e/(e-1)$ constant is therefore the collision cost of symmetric mixing, not a universal impossibility result.

\subsection{The canonical four-protocol ordering}

For every realized pooled posterior in the sixteen-box benchmark, we solve \cref{eq:waterfill} and then average over the exact distribution of report-count classes. The resulting anonymous market discovery is
\begin{equation}
G_{\mkt}^{\mathrm{sym}}=0.59909925.
\label{eq:canonical-market}
\end{equation}
It generates $2.673494$ distinct actions in expectation. The complete canonical comparison is shown in \cref{fig:four-protocols} and \cref{tab:canonical}.

\begin{figure}[t]
\centering
\includegraphics[width=0.92\linewidth]{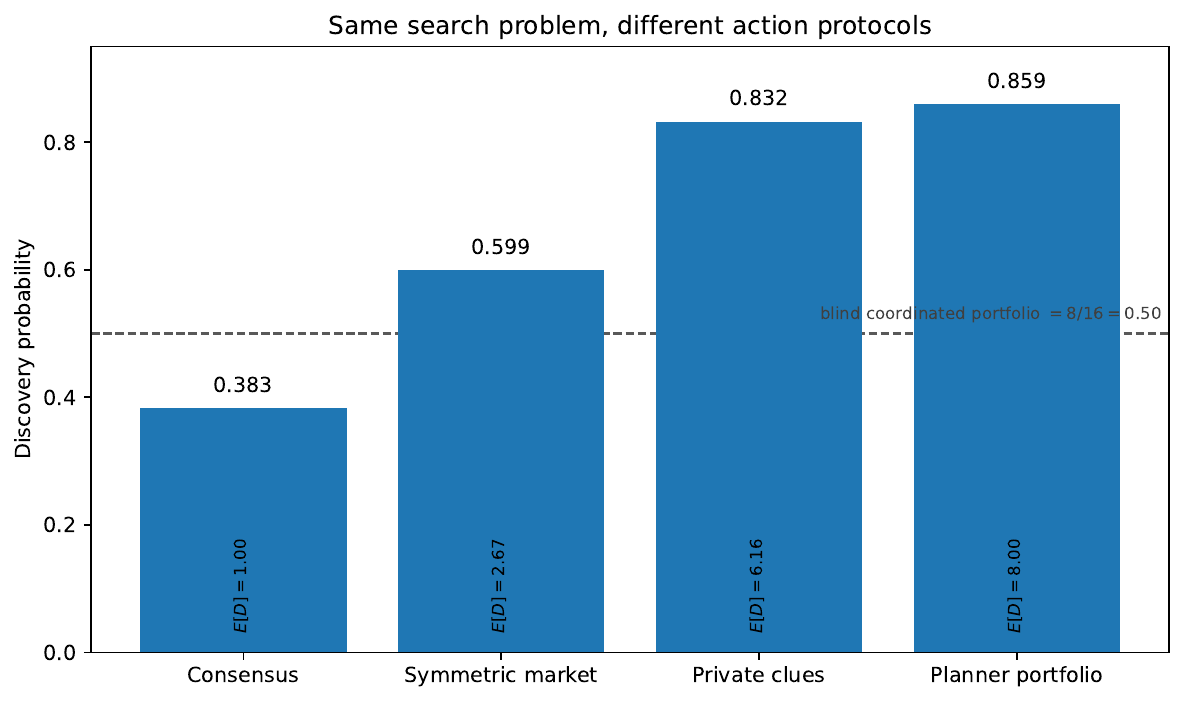}
\caption{Exact ex-ante discovery under four protocols in the independent sixteen-box benchmark. The labels inside the bars report the expected number of distinct boxes searched. The dashed line marks the blind coordinated portfolio at $8/16=0.50$: informed consensus falls below a protocol that ignores the clues entirely. Equal sharing supplies partial dispersion, but not enough to match natural coverage from independent private clues.}
\label{fig:four-protocols}
\end{figure}

\begin{table}[t]
\centering
\caption{Canonical protocol comparison: $M=16$, $N=8$, $p=0.20$.}
\label{tab:canonical}
\begin{threeparttable}
\begin{tabular}{lllccr}
\toprule
Protocol & Information & Allocation rule & Quality $Q$ & Discovery $G$ & $\E[$distinct$]$\\
\midrule
Consensus & pooled & repeat posterior mode & 0.383469 & 0.383469 & 1.000\\
Symmetric market & pooled & equal-split water-filling & 0.344136 & 0.599099 & 2.673\\
Private clues & local & follow own report & 0.200000 & 0.832228 & 6.157\\
Planner portfolio & pooled & top eight, one each & 0.107428 & 0.859421 & 8.000\\
\bottomrule
\end{tabular}
\begin{tablenotes}[flushleft]\footnotesize
\item $Q$ is the average standalone probability that a selected box contains the jackpot, as in \cref{eq:redundancy}. The symmetric-market row is an equilibrium-selection benchmark, not a claim that every Nash equilibrium has the same value.
\end{tablenotes}
\end{threeparttable}
\end{table}
\FloatBarrier

The table records an exact inversion:
\[
Q_{\cons}>Q_{\mkt}^{\mathrm{sym}}>Q_{\priv}>Q_{\plan}
\qquad\text{while}\qquad
G_{\cons}<G_{\mkt}^{\mathrm{sym}}<G_{\priv}<G_{\plan}.
\]
Moving toward the group-optimal allocation requires assigning actions that are individually less likely to be correct in isolation: the planner raises group discovery to $0.8594$ while reducing the average standalone probability that an assigned box contains the jackpot to $0.1074$. Average quality should not be confused with equilibrium payoff. By \cref{eq:welfare-identity}, the symmetric market payoff per searcher is $\lambda=G_{\mkt}^{\mathrm{sym}}/N=0.074887$. The quality number says the market concentrates on individually attractive boxes; the payoff number says crowding sharply dilutes the private return from doing so.

Congestion pricing cures unanimity but not concentration. Private clue-followers condition on different signals and therefore obtain substantial natural coverage. Symmetric market participants condition on the same posterior and rationally overweight its peak. The invisible hand disperses, but toward the wrong distribution.

\section{Reward the rescue, not the win}
\label{sec:mechanism}

The equal-split rule rewards an agent for being on the jackpot box. The group values an agent for being on a jackpot box that no one else covered. The difference is marginal contribution.

Consider the \emph{sole-rescue rule}: if exactly one searcher chose the realized target, that searcher receives the unit prize; if two or more chose the target, no one is paid. Conditional on posterior $\pi$, a searcher choosing box $b$ receives expected payoff $\pi_b$ if alone on $b$ and zero otherwise.

\begin{proposition}[Pure-strategy implementation]
Suppose $M\ge N$ and every posterior mass is positive. Under the sole-rescue rule:
\begin{enumerate}[label=(\roman*)]
\item the game is an exact potential game with potential equal to covered posterior mass;
\item every pure Nash equilibrium is collision-free; and
\item the occupied boxes in every pure Nash equilibrium are exactly an $N$-box posterior-optimal portfolio.
\end{enumerate}
Thus every pure Nash equilibrium attains $G_{\opt}$.
\end{proposition}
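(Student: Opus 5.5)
The plan is to verify the three claims in order, using the pure-profile payoff description given just before the statement: a searcher on box $b$ earns $\pi_b$ if alone there and $0$ otherwise. Write $W(a)=\sum_{b:\,n_b(a)\ge1}\pi_b$ for covered posterior mass.

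For (i), I will take $W$ as the potential and check that a unilateral move by player $i$ from box $b$ to box $c\neq b$ changes $W$ by exactly the change in $i$'s payoff. Removing $i$ from $b$ lowers $W$ by $\pi_b$ exactly when $i$ was alone on $b$, which is precisely $i$'s old payoff. Adding $i$ to $c$ raises $W$ by $\pi_c$ exactly when $c$ was empty, which is precisely $i$'s new payoff. So $\Delta W=u_i(c,a_{-i})-u_i(b,a_{-i})$. This is the marginal-contribution (Wonderful Life) structure, and it gives the existence of pure equilibria.

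For (ii), suppose a pure equilibrium $a$ has a box with $n_b(a)\ge2$. Then at most $N-1$ boxes are occupied. Because $M\ge N$, some box $c$ is empty. Every player on $b$ earns $0$, and deviating to $c$ earns $\pi_c>0$ since all masses are positive. This is a profitable deviation, so every pure equilibrium is collision-free and occupies exactly $N$ distinct boxes.

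For (iii), take a collision-free equilibrium $a$ with occupied set $S$, $|S|=N$. If $S$ were not a top-$N$ set, there would be $b\in S$ and $c\notin S$ with $\pi_c>\pi_b$. The lone occupant of $b$ earns $\pi_b$, and moving to the empty box $c$ earns $\pi_c$, a profitable deviation. Hence every element of $S$ has mass at least that of every excluded box, so $S$ maximizes $\sum_{b\in S}\pi_b$ over $N$-sets. Ties at the cutoff give several optimal portfolios, but all have the same mass. Therefore $W(a)=\sum_{b\le N}\pi_{(b)}=G_{\opt}$.

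The only delicate step is (ii). It needs both hypotheses: $M\ge N$ to guarantee an empty box, and positive mass so the deviation is strictly profitable. Without them, zero-mass boxes or $M<N$ could support equilibria with collisions. Tie-handling in (iii) is routine.
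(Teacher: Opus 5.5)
Your proposal is correct and matches the paper's proof essentially step for step. It uses the same leave/enter accounting to show covered mass is an exact potential, the same deviation to an empty positive-mass box to rule out collisions, and the same exchange deviation to show the occupied set must be a top-$N$ set.
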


\begin{proof}
If a player leaves a box, covered mass falls by $\pi_b$ exactly when the player was its sole occupant. If the player enters another box, covered mass rises by $\pi_c$ exactly when that box was empty. The player's payoff change therefore equals the change in covered mass.

If a pure profile contains a collision, at least one box is empty because $M\ge N$. A colliding player earns zero and can move to an empty positive-mass box for a strictly positive payoff, so no collision can persist. In a collision-free profile, a player on box $b$ earns $\pi_b$. Moving to an occupied box yields zero; moving to an empty box $c$ yields $\pi_c$. The profile is a Nash equilibrium if and only if every occupied box has posterior mass at least as large as every empty box. Its occupied set is therefore a top-$N$ set.
\end{proof}

The result is full implementation in pure Nash equilibrium, not dominant-strategy implementation. A searcher's best action still depends on which boxes others occupy. Nor does the result eliminate all mixed-equilibrium or equilibrium-selection concerns. A public role-assignment or sequential convention may still be useful for selecting among efficient top-$N$ matchings.

The rule is weakly budget-balanced ex post: it never pays more than the unit prize. On the efficient equilibrium path it pays the full prize exactly when discovery occurs. Off equilibrium it burns the prize on a duplicated discovery. In richer applications, attribution, side payments, risk aversion, and collusion may make literal sole-rescue contracting difficult. The benchmark point is narrower: compensating unique coverage aligns the potential with the organization's discovery objective, whereas equal sharing aligns total payout with discovery but not the allocation of effort.

\section{Common-source dependence}
\label{sec:copying}

Real searchers rarely receive conditionally independent signals. They read the same reports, use the same models, attend the same meetings, and inherit the same narratives. We introduce one latent common source to make this dependence explicit.

\subsection{Copying and channel collapse}

Let $X_0$ be a common cue with the same accuracy law as a private clue. Each searcher independently copies $X_0$ with probability $c$ and otherwise draws an independent clue:
\[
X_i=
\begin{cases}
X_0, & \text{with probability }c,\\
\widetilde X_i, & \text{with probability }1-c.
\end{cases}
\]
The copying probability $c$ is a scalar parameter---distinct from the report-count vector $C=(C_1,\ldots,C_M)$ of \cref{sec:signals}---and it is not the pairwise correlation coefficient. For correctness indicators $Y_i=\1\{X_i=\theta\}$,
\[
\mathrm{Corr}(Y_i,Y_j\mid\theta)=c^2.
\]

Let $K\sim\mathrm{Binomial}(N,c)$ be the number of copiers. Conditional on $K$, the copiers contribute one common discovery channel if $K\ge1$, while the $N-K$ noncopiers contribute independent channels.

\begin{proposition}[Channel collapse]
The effective channel count is
\begin{equation}
N_{\mathrm{eff}}=N-K+\1\{K\ge1\}.
\label{eq:effective-channels}
\end{equation}
Private report-following discovery is
\begin{equation}
G_{\priv}(c)
=1-p\bigl[(1-c)(1-p)\bigr]^N
-(1-p)\bigl[1-p(1-c)\bigr]^N.
\label{eq:private-copying}
\end{equation}
\end{proposition}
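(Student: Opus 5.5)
Condition on $\theta$ and on the copy indicators; the argument then has two parts. First, the channel count. Given $K$, the copiers all report the single value $X_0$. That value is one discovery channel when $K\ge1$ and none when $K=0$. The $N-K$ noncopiers report independent clues $\widetilde X_i$ that are independent of $X_0$, so each is a separate channel. Counting gives $N_{\mathrm{eff}}=N-K+\1\{K\ge1\}$ directly. The only thing to check is that \cref{eq:effective-channels} counts distinct independent draws, not distinct realized boxes; by construction these are exactly the conditionally independent sources.

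Second, the discovery formula. Condition on whether the common cue is correct, an event with probability $p$.

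On $\{X_0=\theta\}$, the group fails only if there are no copiers and every noncopier's clue misses. Each searcher independently satisfies ``not a copier and own clue wrong'' with probability $(1-c)(1-p)$. The failure probability on this event is therefore $[(1-c)(1-p)]^N$.

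On $\{X_0\neq\theta\}$, copiers never find the target. Each searcher independently hits the target only by being a noncopier with a correct clue, which has probability $(1-c)p$. Failure then has probability $[1-p(1-c)]^N$.

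Combining the two cases,
\[
1-G_{\priv}(c)=p\bigl[(1-c)(1-p)\bigr]^N+(1-p)\bigl[1-p(1-c)\bigr]^N,
\]
which rearranges to \cref{eq:private-copying}.

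A useful check is the endpoints. At $c=0$ the two terms sum to $(1-p)^N$, recovering \cref{eq:private}. At $c=1$ the formula gives $G_{\priv}=p$, a single channel.

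The main point of care is the independence claim. Conditional on $\theta$ and on $X_0$, the pairs (copy indicator, $\widetilde X_i$) are i.i.d. across $i$. This is what makes the per-searcher failure events independent, so that the products above are valid. I would state it explicitly rather than conditioning on $K$ and summing a binomial, which gives the same answer with more algebra.
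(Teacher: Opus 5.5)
Your proof is correct, but you reach \cref{eq:private-copying} by conditioning on something different from the paper. The paper conditions on the copier count $K$. Given $K=k$, private search misses with probability $(1-p)^{N-k+\1\{k\ge1\}}=(1-p)^{N_{\mathrm{eff}}}$. Averaging over $K\sim\mathrm{Binomial}(N,c)$, using the binomial theorem with a correction for the $k=0$ term, then collects into the closed form.

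You instead condition on whether the common cue is correct. Given $(\theta,X_0)$ the searchers are i.i.d., so the miss probability factors searcher by searcher and the closed form appears at once. Each term also has a direct reading: either the cue is right but no one copied and every noncopier missed, or the cue is wrong and no noncopier hit.

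The paper's route yields the identity $1-G_{\priv}(c)=\E\bigl[(1-p)^{N_{\mathrm{eff}}}\bigr]$. That identity is what ties the two halves of the proposition together and justifies calling $N_{\mathrm{eff}}$ the effective channel count. In your write-up, the channel count is a separate counting remark that plays no role in the derivation. Conditioning on $K$ is also the structure the paper reuses in the planner-gain proof (the function $h(k)$ and the binomial derivative identity). Your route is shorter and gives a transparent reason for the product form.
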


\begin{proof}
If at least one searcher copies, private search misses only if the common cue misses and every noncopier's independent clue misses. If no searcher copies, all $N$ independent clues must miss. Averaging over $K$ and collecting terms yields \cref{eq:private-copying}.
\end{proof}

At $c=0$, \cref{eq:private-copying} is $1-(1-p)^N$. At $c=1$, it is $p$: eight nominal searchers have become one effective clue. In the canonical model, expected distinct proposals fall from $6.157$ at $c=0$ to $1.362$ at $c=0.95$.

\subsection{Agreement need not mean posterior rank}

Under independence, report count is a sufficient ranking statistic. Under copying, a large count can be generated by one incorrect source repeated many times. Exact marginalization over the latent cue and copier count can reverse the ranking.

At $c=0.4$, consider counts $(6,2,0,\ldots,0)$. Let $a$ be the six-count box and $b$ the two-count box. The exact count likelihoods are
\[
\Prob(C\mid\theta=a)=1.1427\times10^{-4},
\qquad
\Prob(C\mid\theta=b)=1.3977\times10^{-4}.
\]
With a uniform prior, the two-count box has the larger posterior. This does not claim that count ranking generally fails dramatically; it establishes that dependence-aware ranking is conceptually distinct from raw agreement. In the canonical calculations, the Bayes-versus-count discovery discount is concentrated at one- and two-action budgets and vanishes for $L\ge3$ on the tested grid.

\subsection{The centralized planner gain}

Let $D$ be the number of distinct privately proposed boxes and let $V_N(c)$ be the Bayes-optimal $N$-action portfolio value after observing reports generated at copying probability $c$. Define the equal-budget planner gain
\begin{equation}
\Delta_{\plan}(c)=V_N(c)-G_{\priv}(c).
\label{eq:planner-gain}
\end{equation}

\begin{theorem}[Monotone planner gain]
In the atomic $M$-state model with $2\le N\le M$ and $p>1/M$, the planner gain $\Delta_{\plan}(c)$ is strictly increasing for $c\in(0,1)$. At full copying,
\begin{equation}
\Delta_{\plan}(1)=(N-1)q.
\label{eq:planner-limit}
\end{equation}
\end{theorem}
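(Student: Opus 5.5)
The plan is to obtain an exact closed form for $\Delta_{\plan}(c)$ and then prove monotonicity with a pathwise coupling. Everything rests on one structural claim.

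\emph{Named boxes dominate unnamed boxes.} Under the copying model with $p>1/M$ (equivalently $p>q$), every box named by at least one report has strictly larger posterior than every unnamed box, and all unnamed boxes share a common posterior. Symmetry among unnamed boxes follows from exchangeability of the false labels. For dominance I would write $\Prob(C\mid\theta=b)$ as a mixture over the copier count $K$, the copier identities, and the value of $X_0$. Each mixture component is a product of per-source factors $p$ (source names $\theta$) or $q$ (source names a specific wrong box). Then I would match components under $\theta=b$ (named) and $\theta=u$ (unnamed) term by term. Under $\theta=u$ every source that produced a report contributes a factor $q$. Under $\theta=b$ the same source configuration contributes the same factors, except that at least one source naming $b$ contributes $p$ in place of $q$. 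Since $p/q>1$, the named likelihood is strictly larger. I expect this step to be the main obstacle. The mixture over which latent source produced which reported label must be set up so that the matching is a bijection. The natural choice is to index components by the latent (source, label) assignment consistent with the observed counts.

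\emph{Closed form.} Let $D\le N\le M$ be the number of named boxes. By the dominance claim, the Bayes top-$N$ set consists of all $D$ named boxes plus any $N-D$ unnamed boxes. The named boxes carry exactly the private-search coverage. Each unnamed box carries posterior $\Prob(\text{all clues miss}\mid C)/(M-D)$. Taking expectations gives
\[
\Delta_{\plan}(c)=\E\!\left[\frac{N-D}{M-D}\,\1\{\text{all reports miss }\theta\}\right].
\]
At $c=1$ we have $D=1$ and the miss probability is $1-p$, so $\Delta_{\plan}(1)=(N-1)(1-p)/(M-1)=(N-1)q$, which is \cref{eq:planner-limit}.

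\emph{Monotonicity by coupling.} Draw $\theta$, independent clues $Z_1,\dots,Z_N$ with the private law, and uniforms $U_1,\dots,U_N$. Searcher $i$ is a copier at level $c$ iff $U_i<c$. Every copier reports $Z_{s^*}$, where $s^*$ is the copier with the smallest $U$. This reproduces the model, since $Z_{s^*}$ has the law of $X_0$ and is independent of the noncopiers. Now raise $c$. New copiers have larger $U$ values, so $s^*$ stays fixed once the copier set is nonempty, and the first copier simply keeps its own clue. Every later change replaces some $Z_j$ by the already-reported label $Z_{s^*}$. Hence the named set weakly shrinks pathwise, $D$ weakly falls, and the miss event weakly grows. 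Because $(N-D)/(M-D)$ is decreasing in $D$ when $N\le M$, the integrand is pathwise nondecreasing in $c$.

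\emph{Strictness.} For $c<c'$ in $(0,1)$, I would exhibit a positive-probability event on which the integrand strictly increases. On this event a copier exists at $c$; some searcher $j$ with $U_j\in[c,c')$ holds a clue $Z_j$ that is wrong, uniquely reported, and distinct from $Z_{s^*}$; and all reports miss. Switching $j$ to a copier lowers $D$ by one, which strictly raises $(N-D)/(M-D)$ because $N<M$ or $D$ is otherwise constrained. If $N=M$, I would check this boundary separately. There the integrand is $1\{\text{miss}\}$ when $D<N$, and the case needing care is the event that a hit is removed: $j$ held the unique correct clue and the miss event switches on. That event also has positive probability because $p\in(0,1)$ and $N\ge2$.
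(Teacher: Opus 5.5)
Your proposal is correct, and it has the same architecture as the proof in \cref{app:planner-proof}: a named-before-unnamed lemma, the fill identity \cref{eq:fill-identity}, a coupling in which turning an independent searcher into a copier weakly shrinks the named set and enlarges the miss event, and a positive-probability event for strictness. Two technical steps are done differently.

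For dominance, the paper indexes the mixture \cref{eq:common-likelihood} by cue value $z$ and copier count $k$. Because it sums over $z$ even when $k=0$, it meets a term ($z=d$, $k=0$) with ratio $(p/q)^{x_b-1}$, which can equal one, so it states only weak dominance. You index by copier sets and marginalize the cue out when no one copies. Then every component has ratio $(p/q)^{m_b}\ge p/q$. This dissolves the bijection worry you flag as the main obstacle, and it gives strict dominance and the common unnamed posterior at once.

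For monotonicity, the paper conditions on $K=k$ and proves $h(k+1)\ge h(k)$, checking $h(0)=h(1)$ separately. It then passes to $c$ through the binomial derivative identity $\Delta_{\plan}'(c)=N\,\E[h(K'+1)-h(K')]$. Your uniform-threshold coupling, with the common cue taken to be the clue of the lowest-$U$ copier, puts all $c$ on one probability space. It builds in $h(0)=h(1)$, since the first copier keeps its own clue, and it gives $\Delta_{\plan}(c')>\Delta_{\plan}(c)$ directly without differentiation. The paper's route gives an explicit slope formula; yours is more elementary.

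One simplification: the hit-to-miss event you reserve for $N=M$ works for every $N\le M$, and it is the event the paper uses. After the conversion two searchers share a label, so $D\le N-1$ and the integrand jumps from $0$ to $(N-D)/(M-D)>0$. Using it throughout removes the case split and the vague ``otherwise constrained'' clause. Your wrong-unique-clue event gives strictness only when $N<M$, because $(N-D)/(M-D)$ is constant in $D$ when $N=M$.
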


The proof is in \cref{app:planner-proof}. Its key identity is
\begin{equation}
\Delta_{\plan}(c)
=\E\left[\1\{\text{private search misses}\}\frac{N-D}{M-D}\right].
\label{eq:fill-identity}
\end{equation}
All named boxes precede unnamed boxes in the posterior ranking, so the planner preserves every distinct proposal and fills unused slots with otherwise unsearched states. Conditional on a miss, unnamed states are symmetric. Increasing the number of copiers weakly shrinks $D$ and can convert a success into a miss; both changes raise the value of the planner's fill slots. The binomial copier count shifts upward with $c$.

\begin{figure}[t]
\centering
\includegraphics[width=0.84\linewidth]{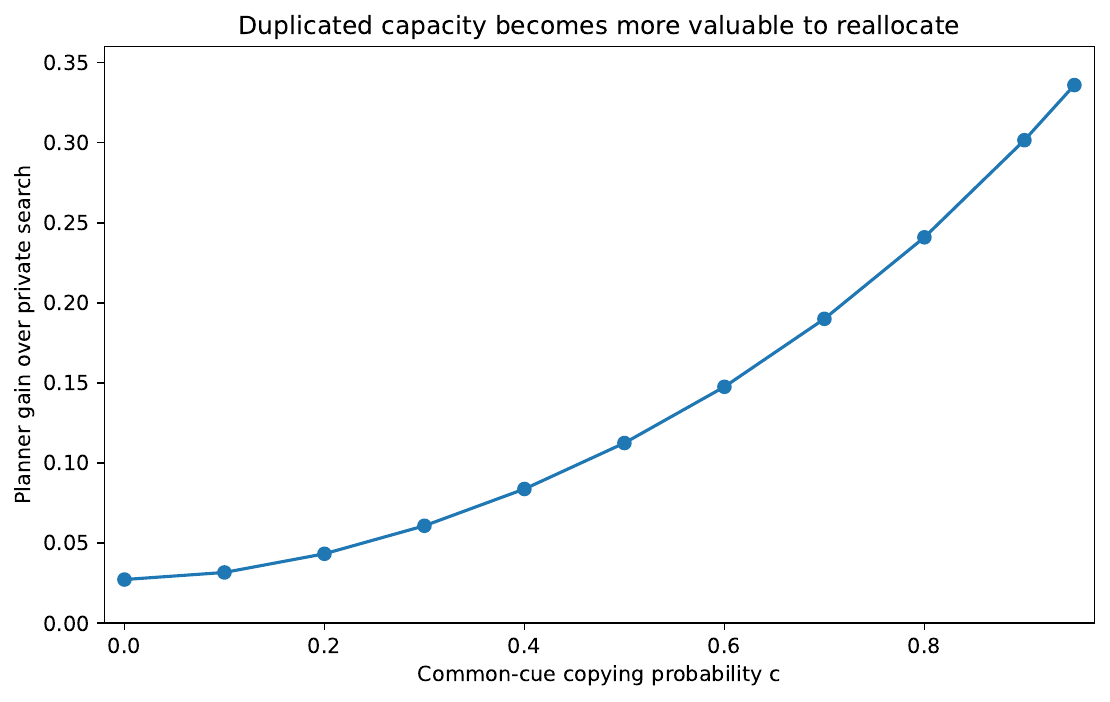}
\caption{Exact canonical planner gain $V_8(c)-G_{\priv}(c)$. As common-source copying creates more duplicated capacity, identifying and reallocating it becomes more valuable.}
\label{fig:planner-gain}
\end{figure}

For the canonical parameters, the gain rises from $0.027193$ at $c=0$ to $0.335927$ at $c=0.95$, with limit $0.373333$. The recovery budget falls from seven coordinated actions at $c=0$ to two at $c=0.95$. The theorem concerns a frictionless centralized benchmark: it combines pooled observation with assignment authority and therefore upper-bounds what a decentralized institution may realize.

\subsection{Selfish search under copying}

Now let searchers pool their reports, form the herd-aware posterior by marginalizing over the latent cue and copier count, and play the equal-split game on that posterior. For each report-count class, we solve the anonymous equilibrium and average conditional discovery over its exact probability.

\begin{table}[t]
\centering
\caption{Canonical outcomes under common-cue copying.}
\label{tab:copying}
\begin{tabular}{rcccc}
\toprule
$c$ & Consensus & Symmetric market & Private reports & Planner\\
\midrule
0.00 & 0.383 & 0.599 & 0.832 & 0.859\\
0.50 & 0.271 & 0.552 & 0.655 & 0.768\\
0.70 & 0.240 & 0.477 & 0.512 & 0.702\\
0.80 & 0.221 & 0.428 & 0.423 & 0.664\\
0.90 & 0.206 & 0.388 & 0.319 & 0.621\\
0.95 & 0.202 & 0.380 & 0.262 & 0.598\\
\bottomrule
\end{tabular}
\end{table}

At low copying, private report-following wins because different reports supply natural coverage. As copying rises, that advantage collapses. The market continues to disperse over the support of the common posterior and therefore degrades more slowly. High-precision enumeration gives the canonical crossover
\begin{equation}
c^*=0.7884616565,
\qquad
G_{\mkt}^{\mathrm{sym}}(c^*)=G_{\priv}(c^*)=0.4338934643.
\label{eq:crossover}
\end{equation}
At full copying, private report-following and consensus both equal $p=0.20$, the planner reaches $p+(N-1)q=0.573333$, and the anonymous equal-split market reaches approximately $0.382006$.

The crossover is not knife-edge in the parameters. A comparative-statics scan over $72$ combinations---$M\in\{8,12,16,24,32\}$, $N\in\{4,6,8\}$, $p\in\{0.10,\ldots,0.30\}$ with $p>1/M$, and $c$ scanned in steps of $0.02$ up to $0.98$---finds that the market--private gap is negative at $c=0$ in every combination and changes sign at most once: $52$ combinations cross exactly once, at crossovers ranging from $c\approx0.29$ to $c\approx0.98$, while in the remaining $20$, concentrated at large $M$, small $N$, and strong signals, private report-following stays ahead throughout the scanned range. On this grid the qualitative pattern is uniform---natural coverage first, forced dispersion after enough channel collapse---though single crossing remains a computational pattern rather than a theorem.

\begin{figure}[t]
\centering
\includegraphics[width=0.94\linewidth]{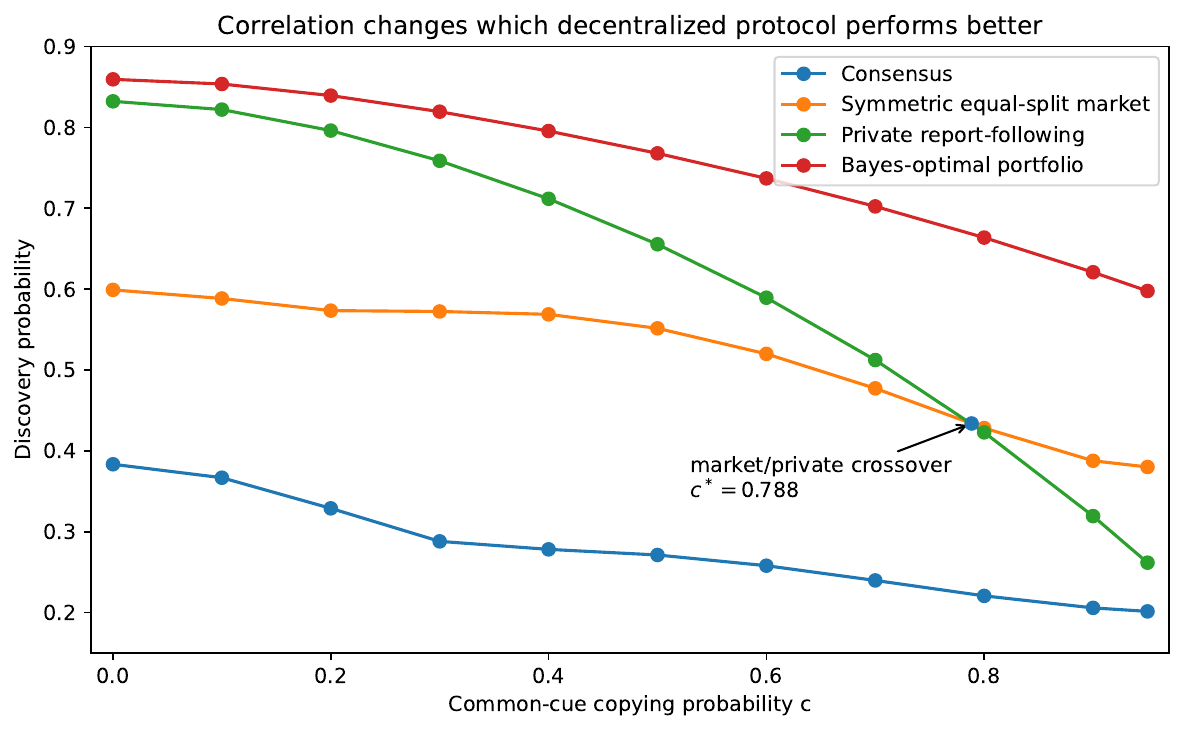}
\caption{Exact canonical comparison under common-source copying. The symmetric equal-split market initially trails decentralized report-following, then overtakes it when channel collapse becomes sufficiently severe. The crossover is a canonical computational result, not a general theorem.}
\label{fig:copying-protocols}
\end{figure}

The institutional implication is conditional. When information sources are genuinely independent, decentralized local action can dominate a common-posterior market. When sources are highly correlated, some mechanism that forces dispersion can outperform nominal decentralization. Correlation raises the value not only of a perfect planner but also of imperfect coordination.
\FloatBarrier

\section{Why this is a benchmark}
\label{sec:canonical}

A useful benchmark is not a universal law. It is a small model that isolates a mechanism, supplies stable vocabulary, and produces quantities that richer models can preserve or overturn. The sixteen-box environment is designed for that role.

\begin{table}[t]
\centering
\caption{The benchmark dictionary.}
\label{tab:dictionary}
\begin{tabularx}{\linewidth}{>{\raggedright\arraybackslash}p{0.24\linewidth}X}
\toprule
Object & Interpretation\\
\midrule
One-action value $V_1(\F)$ & How accurately the information architecture identifies its best single target.\\
Portfolio frontier $V_L(\F)$ & The discovery attainable from $L$ actions under the same posterior.\\
Protocol loss $\Loss_L$ & Attainable posterior mass destroyed by the action rule.\\
Recovery budget $L^*$ & Smallest coordinated budget that replaces decentralized search.\\
Effective channels & Number of independent chances encoded by nominal reports.\\
Symmetric market value & Dispersion supplied by equal sharing without role assignment.\\
Price of anarchy & Worst equilibrium loss relative to the top-$N$ portfolio.\\
Sole-rescue implementation & Alignment obtained by rewarding unique coverage.\\
Correlation crossover $c^*$ & Point at which forced dispersion overtakes natural private coverage.\\
\bottomrule
\end{tabularx}
\end{table}

The benchmark separates four mechanisms that richer environments often confound:
\begin{enumerate}
\item \textbf{Information improvement.} Pooling raises the quality of the posterior ranking.
\item \textbf{Action compression.} A one-answer protocol makes nominally separate actions perfectly correlated.
\item \textbf{Action synthesis.} A coordinator can redirect duplicated capacity to candidates no private clue proposed.
\item \textbf{Channel merging.} Shared sources reduce effective diversity before any explicit consensus rule is applied.
\end{enumerate}

\subsection{Formal lineage}

\paragraph{Team theory and optimal search.}
The common-objective model is a team problem in the sense of \citet{MarschakRadner1972}: agents possess dispersed information and choose a joint action profile. Blackwell's comparison of experiments underlies the value-of-information result \citep{Blackwell1953}. The operational parent is optimal search: \citet{Koopman1957} studies how finite search effort should be allocated over uncertain target locations. The atomic model is a discrete, all-or-nothing corner in which the posterior top-$L$ rule is exact.

\paragraph{Parallel R\&D and organizational learning.}
Parallel projects hedge uncertainty in innovation \citep{Nelson1961}. Organizational-learning and division-of-cognitive-labor models show how imitation and communication can reduce exploration \citep{March1991,Kitcher1990}. Network epistemology makes the same tension endogenous to communication structure \citep{Zollman2007,MayoWilsonEtAl2011}. In the present accounting, limited communication is a second-best instrument: sparse networks preserve diversity only by withholding information value, whereas the planner and rescue benchmarks keep the shared posterior intact and repair the action allocation directly, separating the two effects that communication limits bundle. \citet{DuedeEvans2024} independently describe the belief-side half of this trade-off---collaboration inflates collective certainty while the likelihood of genuine replication falls---which is the accuracy--coverage reversal stated in belief space rather than action space. \citet{RzhetskyEtAl2015} study a much richer dynamic problem of experiment choice on an evolving scientific network. The present benchmark strips away dynamics and careers to expose a finite-action accounting identity: a community can improve its shared ranking and still waste its experimental portfolio.

\paragraph{Herding and correlated judgment.}
Information-cascade models explain how rational observation can generate convergent beliefs or actions \citep{Banerjee1992,BikhchandaniEtAl1992,SmithSorensen2000}. Correlated-jury work shows that dependence weakens voting aggregation \citep{Ladha1992}. Our copying model does not claim the first failure of raw agreement. It places dependence inside a multi-alternative discovery problem and measures three distinct consequences: channel collapse, posterior-versus-count ranking loss, and the value of reallocating duplicated actions.

\paragraph{Congestion, covering, and potential games.}
The equal-split extension is a singleton congestion game with a harmonic potential \citep{Rosenthal1973,MondererShapley1996}. Its welfare is a coverage function, linking it to valid-utility and covering games \citep{Vetta2002,Gairing2009}. This lineage is important for claim calibration: the exact $2-1/N$ bound is not presented as a new general result in algorithmic game theory. What is new in the benchmark is the way the covering game is generated endogenously by a Bayesian posterior and compared ex ante with private signals, consensus, a planner, and common-source copying. A complementary line studies how information itself interacts with congestion: \citet{AcemogluEtAl2018} show that giving drivers additional route information can raise equilibrium travel times---an informational Braess' paradox---and characterize the network class immune to it, with a subsequent information-design literature choosing what to disclose \citep{DasKamenicaMirka2017}. The mechanism is instructively different. There, information expands strategy sets in a delay-minimization routing game and harms through re-routing externalities in equilibrium; here, the information architecture is held fixed, richer information weakly raises the attainable frontier (the argument of \cref{prop:voi} applies at every budget), and the loss arises entirely from the protocol that converts the posterior into actions. Whether a sharper shared posterior can lower equilibrium coverage through the water-filling pure corner---an informational Braess' paradox in discovery rather than delay---is a disclosure-stage question we leave open.

\paragraph{Submodular extensions.}
With one atomic target, the value of a set of distinct boxes is modular. If actions cover overlapping state sets, the objective becomes monotone submodular; if actions reveal information sequentially, adaptive submodularity becomes relevant \citep{NemhauserEtAl1978,GolovinKrause2011}. The exact top-$L$ portfolio would then be replaced by approximation guarantees, but protocol loss and recovery budgets would remain well-defined.

\section{Organizational implications}
\label{sec:implications}

\paragraph{Research and development.}
A firm may combine analyst reports, technical reviews, and shared models to improve its ranking of projects, then route too much laboratory capacity toward the same leading candidate. The benchmark suggests treating information aggregation and project assignment as separate design problems. A portfolio manager, explicit capacity reservation for minority hypotheses, or marginal-contribution credit can preserve the value of shared information without preserving disagreement for its own sake.

\paragraph{Venture capital and innovation portfolios.}
Shared diligence can improve judgment about the modal opportunity while concentrating capital and sourcing effort in the same companies or categories. The model is not a literal theory of venture returns: it omits prices, ownership, multiple winners, heavy tails, and endogenous entry. It does yield a testable comparative static. As sourcing and diligence become more correlated, the marginal value of portfolio-level differentiation should rise.

\paragraph{Science.}
Open communication can improve beliefs about promising hypotheses while narrowing the set of experiments performed. Restricting communication can preserve diversity, but only by withholding information. The planner and sole-rescue benchmarks point to a different design: preserve the shared posterior and diversify the actions. Provenance data could help distinguish many nominal reports from many independent channels.

\paragraph{Multi-agent artificial intelligence.}
Agents connected to the same model, memory, retrieval context, or evaluation rubric may produce individually stronger but highly correlated recommendations. Multi-agent systems should therefore distinguish ensemble size from effective channel count. Posterior sampling, explicit role differentiation, diverse retrieval sources, and marginal-coverage rewards are natural analogues of the benchmark's portfolio mechanisms. The smoothness form of \cref{thm:poa} is directly relevant here: ensembles of adaptive agents running any regret-minimizing update inherit the $2-1/N$ coverage guarantee under equal-split credit, without ever computing an equilibrium. The blind-sentinel negative result is equally actionable: adding uninformed explorer agents to an informed ensemble is dominated in the canonical environment, so useful diversity must be signal-contingent---sampled from the posterior or assigned to distinct evidence---rather than signal-free.

The empirical program is straightforward in principle. Estimate posterior or score distributions, measure repeated-action patterns and source overlap, compare observed coverage with the top-$L$ frontier, and report protocol loss and the recovery budget. The copying parameter is a model, not an observable primitive, but source provenance and correlated failure data can discipline it. So can duplication itself: the expected number of distinct proposals is strictly decreasing in $c$ on the computed grid (from $6.157$ at $c=0$ to $1.362$ at $c=0.95$ in the canonical environment), so matching an observed duplication level to the benchmark's exact curve pins down a latent copying rate, which in turn implies a recovery budget and a planner gain. This is the sense in which the benchmark's quantities serve as reference points for richer models: comparison and inversion, not structural estimation.

\section{Limitations and extensions}
\label{sec:limitations}

The benchmark is intentionally narrow. Exactly one atomic state is valuable; each action tests one state; actions are simultaneous and costless; private clues are ex ante symmetric; the planner can observe and assign reports; equal-split players share a common posterior; and dependence is generated by one latent cue whose provenance is hidden. These assumptions make both inference and assignment unusually clean.

Several qualifications matter for interpretation. First, the canonical market value uses the anonymous symmetric equilibrium. Equal-split games can also have efficient asymmetric pure equilibria, so role labels, conventions, or sequential play may materially change outcomes. Second, sole-rescue gives full pure-equilibrium implementation but not dominant strategies or a guarantee for every mixed equilibrium. Third, the monotone planner-gain theorem is exact for the stated copying process; the market/private crossover is a canonical computational result. Fourth, the planner benchmark changes both information access and decision rights relative to decentralized clue-following.

One intermediate cell deserves explicit mention. A decentralized-team benchmark would allow the agents to commit ex ante to role-dependent policies $f_1,\ldots,f_N$ while retaining private, noncommunicated clues, with team value $\max_{f_1,\ldots,f_N}\Prob\bigl(\theta\in\{f_1(X_1),\ldots,f_N(X_N)\}\bigr)$. Characterizing that private-information team optimum would separate assignment authority from information pooling, which the planner benchmark changes jointly. We leave it open rather than insert a hand-selected policy: the simple hybrid that replaces clue-followers with blind sentinels is maximized at zero sentinels in the canonical environment, so the interesting policies are signal-contingent.

Natural extensions include heterogeneous searchers, multiple common sources or cliques, endogenous information acquisition, multiple targets, costly or capacity-constrained actions, heavy-tailed prizes, sequential experimentation, overlapping coverage, collusion under rescue rewards, and mechanism design when the posterior itself is privately held. In these environments the numerical yardsticks may change, but the benchmark's separation between information, allocation, incentives, and dependence remains available.

\section{Conclusion}

Collective discovery is not collective estimation with more participants. Estimation asks for one answer. Discovery allocates a portfolio of attempts.

The sixteen-box benchmark makes the distinction exact. Pooling eight imperfect clues raises the best single recommendation from $20.0\%$ to $38.35\%$. Repeating that recommendation lowers discovery from $83.22\%$ to $38.35\%$. A planner using the same pooled reports reaches $85.94\%$, and seven coordinated actions recover the private benchmark. Equal sharing supplies partial dispersion and raises discovery to $59.91\%$, but its worst equilibrium can lose a factor $2-1/N$. Rewarding unique rescue aligns every pure equilibrium with the first-best portfolio. Common-source copying collapses natural coverage, increases the value of reallocation, and eventually makes even an imperfect dispersing market outperform private report-following.

The central design lesson is therefore not ``share less information.'' It is: treat information sharing and action allocation as distinct institutional choices. Share the evidence. Diversify the actions.

\appendix

\section{Exact independent enumeration}
\label{app:enumeration}

By symmetry, condition on box $1$ being the target. The report-count vector
\[
C=(C_1,\ldots,C_M)\sim\mathrm{Multinomial}(N;p,q,\ldots,q).
\]
For a count vector $x$, let
\[
r(x)=|\{b:x_b>x_1\}|,
\qquad
t(x)=|\{b:x_b=x_1\}|.
\]
With uniform randomization at cutoff ties, the conditional probability that the true box belongs to a top-$L$ count set is
\begin{equation}
\tau_L(x)=
\begin{cases}
0, & r(x)\ge L,\\
1, & r(x)+t(x)\le L,\\
\dfrac{L-r(x)}{t(x)}, & r(x)<L<r(x)+t(x).
\end{cases}
\label{eq:tie-inclusion}
\end{equation}
Hence the exact pooled frontier is
\begin{equation}
G_{M,N,L}(p)
=\sum_{x_1+\cdots+x_M=N}
\frac{N!}{\prod_{b=1}^M x_b!}
 p^{x_1}q^{N-x_1}\tau_L(x).
\label{eq:multinomial-frontier}
\end{equation}
For $M=16$ and $N=8$, the sum contains
\[
\binom{N+M-1}{M-1}=\binom{23}{15}=490{,}314
\]
count vectors. Grouping by permutations of the fifteen non-target boxes reduces both the independent and common-cue calculations to $67$ count classes.

At $L=7$, the difference between the pooled portfolio and private clue-following can be decomposed as
\[
\Prob(\theta\in S^*_7\setminus S_{\priv})
-\Prob(\theta\in S_{\priv}\setminus S^*_7).
\]
Exact enumeration gives a synthesis gain of $0.011744$ and a compression loss of $0.007962$. At $L=8$, no box named by a private clue is discarded; the improvement $0.027193$ comes entirely from reallocating duplicated capacity.

\section{Common-cue likelihood and Bayes ranking}
\label{app:likelihood}

For a candidate target $t$, define the report probability vector
\[
\rho_r^{(t)}=
\begin{cases}
p,&r=t,\\q,&r\neq t.
\end{cases}
\]
Let $z$ be the common cue and $k$ the copier count. For observed counts $x$,
\begin{align}
&\Prob(C=x\mid\theta=t,X_0=z,K=k)\\
&\quad=
\1\{x_z\ge k\}
\frac{(N-k)!}{(x_z-k)!\prod_{r\neq z}x_r!}
\prod_{r=1}^M\left(\rho_r^{(t)}\right)^{x_r-k\1\{r=z\}}.
\end{align}
Marginalizing gives
\begin{align}
\Prob(C=x\mid\theta=t,c)
=\sum_{z=1}^M\rho_z^{(t)}
\sum_{k=0}^N\binom Nk c^k(1-c)^{N-k}
\Prob(C=x\mid\theta=t,X_0=z,K=k).
\label{eq:common-likelihood}
\end{align}
With a uniform prior, normalize these $M$ likelihoods to obtain the herd-aware posterior. The verification script precomputes the coefficient of every basis term $c^k(1-c)^{N-k}$ for each count class and candidate target.

\section{Proof of the monotone planner-gain theorem}
\label{app:planner-proof}

We give the omitted details for \cref{eq:fill-identity} and monotonicity.

\begin{lemma}[Named states precede unnamed states]
For every realizable report vector under the common-cue model, every state with a positive report count has posterior probability at least as large as every state with zero reports.
\end{lemma}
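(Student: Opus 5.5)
The plan is a term-by-term comparison of likelihoods. With a uniform prior, the posterior ranking coincides with the ranking of $\Prob(C=x\mid\theta=t,c)$ over $t$. So it suffices to fix a realizable count vector $x$, a named state $a$ with $x_a\ge1$, and an unnamed state $b$ with $x_b=0$, and then show
\[
\Prob(C=x\mid\theta=a,c)\ge\Prob(C=x\mid\theta=b,c).
\]
I would work directly from the marginalized likelihood \cref{eq:common-likelihood}. It is a sum over pairs $(z,k)$ of the cue location and the copier count. The weights $\binom Nk c^k(1-c)^{N-k}$, the indicator $\1\{x_z\ge k\}$ and the multinomial coefficient do not depend on the candidate target $t$. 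Only the factor $\rho^{(t)}_z\prod_r(\rho^{(t)}_r)^{x_r-k\1\{r=z\}}$ does.

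The first step is to rewrite that $t$-dependent factor. For a fixed pair $(z,k)$ it is a product of $N-k+1$ entries of $\rho^{(t)}$: one for the cue and one for each noncopier report. Hence it equals
\[
q^{\,N-k+1}(p/q)^{e_t(z,k)},
\qquad
e_t(z,k)=\1\{z=t\}+x_t-k\1\{z=t\}.
\]
Here $e_t(z,k)$ is the number of primitive draws (the cue together with the independent noncopier clues) that land on $t$. Since $p>1/M$ gives $p>q$, the factor is increasing in $e_t$. It therefore suffices to show $e_a(z,k)\ge e_b(z,k)$ for every pair $(z,k)$ with a nonzero indicator.

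The second step is a three-case check.
\begin{itemize}
\item If $z\notin\{a,b\}$, then $e_a=x_a\ge1>0=e_b$.
\item If $z=a$, the indicator forces $k\le x_a$, so $e_a=1+x_a-k\ge1$, while $e_b=x_b=0$.
\item If $z=b$, the indicator $\1\{x_b\ge k\}$ with $x_b=0$ forces $k=0$. Then $e_b=1$ and $e_a=x_a\ge1$.
\end{itemize}
In every case the $a$-term weakly dominates the $b$-term. Summing over $(z,k)$ gives the claim.

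The only delicate point, and the step I expect to be the main obstacle, is the case $z=b$. Intuitively, a hidden common cue sitting on an unnamed state could favour that state. The indicator $\1\{x_z\ge k\}$ rules this out: it kills every term with $k\ge1$, since copiers of a cue on $b$ would have produced reports naming $b$. The single surviving $k=0$ term gives $b$ exactly one draw, which is matched by $a$'s at least one report.

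I would also note that the argument uses neither the specific value of $c$ nor any symmetry beyond the form of $\rho^{(t)}$. Ties between named and unnamed states can occur, so the lemma is stated weakly, and that is all the fill identity \cref{eq:fill-identity} needs.
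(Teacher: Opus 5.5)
Your proof is correct and matches the paper's: both compare the mixture likelihood \cref{eq:common-likelihood} term by term over $(z,k)$ in the same three cases, with the key observation that a cue on the unnamed state forces $k=0$ and leaves the ratio $(p/q)^{x_a-1}\ge1$, and your exponent count $e_t(z,k)$ simply packages the powers of $p/q$ that the paper tracks case by case. One side remark is off, though it does not affect the proof: since $p>q$, the $k=0$ terms (weight $(1-c)^N>0$ for $c<1$) already favor the named state strictly, and at $c=1$ the comparison is $p$ against $q$, so ties between named and unnamed states never actually occur.
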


\begin{proof}
Fix a named state $b$ and an unnamed state $d$. Compare the mixture likelihood under hypotheses $\theta=b$ and $\theta=d$ term by term in \cref{eq:common-likelihood}. If the common cue is neither $b$ nor $d$, the cue prefactor is identical while the residual multinomial likelihood differs by a nonnegative power of $p/q$ in favor of $b$. If the cue is $b$, its prefactor is $p$ under $\theta=b$ and $q$ under $\theta=d$, and the residual likelihood contributes another nonnegative power of $p/q$. If the cue is $d$, feasibility requires $k=0$ because $d$ is unnamed; the combined likelihood ratio is $(p/q)^{x_b-1}\ge1$. Every matched nonzero term weakly favors the named state. Summing over cues and copier counts preserves the inequality.
\end{proof}

The optimal $N$-action portfolio therefore includes all $D$ distinct named states and fills $N-D$ positions with unnamed states. On the event that private report-following misses, the target lies among the $M-D$ unnamed states. Symmetry assigns equal conditional posterior mass to those states, so the planner's conditional rescue probability is $(N-D)/(M-D)$. This proves \cref{eq:fill-identity}.

Condition on copier count $K=k$. Given $k$, the joint law of the common cue and the $N-k$ independent clues does not otherwise depend on $c$. Define
\[
h(k)=\E\left[\left.\1\{\text{miss}\}\frac{N-D}{M-D}\right|K=k\right].
\]
Then $\Delta_{\plan}(c)=\E[h(K)]$ for $K\sim\mathrm{Binomial}(N,c)$. We have $h(0)=h(1)$ because one copy of $X_0$ has the same joint distribution as one independent clue. For $k\ge1$, couple the $k$-copier system and the $(k+1)$-copier system by converting one independent searcher into an additional copier while holding the common cue and all other reports fixed. Because the common cue is already represented, the conversion can only weakly reduce $D$. It can only weakly increase the miss indicator. The fraction $(N-D)/(M-D)$ is nondecreasing as $D$ falls when $N\le M$. Thus $h(k+1)\ge h(k)$. The inequality is strict for at least one $k$: with positive probability, the converted independent clue is the only report naming the target while the common cue and all remaining clues miss.

If $K'\sim\mathrm{Binomial}(N-1,c)$, the standard binomial derivative identity yields
\[
\frac{d}{dc}\E[h(K)]
=N\E[h(K'+1)-h(K')]>0
\qquad(c\in(0,1)).
\]
The hypothesis $N\ge2$ is necessary: with a single searcher the named box is the posterior mode, the planner replicates private search, the gain is identically zero, and the derivative vanishes throughout $(0,1)$ because $h(0)=h(1)$.
At $c=1$, one box is named. The planner adds $N-1$ of the $M-1$ unnamed boxes, each with posterior mass $q$, while private search covers only the named box. Hence $\Delta_{\plan}(1)=(N-1)q$.

\section{Proof of the proportional sparse-evidence limit}
\label{app:scaling}

Throughout, $N=N_M$ with $N_M/M\to\alpha\in(0,1)$, and $p_M=r/(M-1+r)$, $q_M=1/(M-1+r)$ for fixed $r>1$, so that $N_Mp_M\to\alpha r$ and $N_Mq_M\to\alpha$: the target receives an asymptotically $\mathrm{Poisson}(\alpha r)$ number of reports, while each false box receives approximately $\mathrm{Poisson}(\alpha)$. All limits are as $M\to\infty$ and all clues are independent.

\emph{Blind portfolio.} The coordinator selects $N_M$ distinct boxes without observing clues. Under the uniform prior, $G_{\mathrm{blind}}=N_M/M\to\alpha$.

\emph{Private clue-following.} By \cref{eq:private}, $G_{\priv}=1-(1-p_M)^{N_M}$, and $N_M\log(1-p_M)=-N_Mp_M\bigl(1+O(p_M)\bigr)\to-\alpha r$, so $G_{\priv}\to1-e^{-\alpha r}$.

\emph{Consensus.} Condition on $\theta$. The target count $C_\theta\sim\mathrm{Binomial}(N_M,p_M)$ converges in distribution to $\mathrm{Poisson}(\alpha r)$ and is therefore bounded in probability. Conditional on the number of off-target reports $F=N_M-C_\theta$, the false counts are multinomial with $F$ trials over $M-1$ equally likely boxes, and $F/M\to\alpha$ in probability. Fix an integer $j$ and let $W_j=|\{b\neq\theta:C_b\ge j\}|$. Standard occupancy asymptotics give $\E[W_j]/M\to\Prob(\mathrm{Poisson}(\alpha)\ge j)>0$, and the variance of $W_j$ is $O(M)$ because the occupancy indicators are negatively correlated, so $W_j\to\infty$ in probability. Hence $\max_{b\neq\theta}C_b\ge j$ with probability tending to one for every fixed $j$: the largest false count diverges in probability while $C_\theta$ remains tight. Consensus success requires $C_\theta\ge\max_{b\neq\theta}C_b$ (an event whose probability tends to zero), so $G_{\cons}\to0$; uniform tie-breaking only lowers the success probability further.

\emph{Portfolio.} By \cref{eq:posterior-count}, every named box has strictly larger posterior mass than every unnamed box, and at most $N_M$ boxes are named, so the top-$N_M$ rule retains every named box and fills its remaining slots uniformly at random among the posterior-tied unnamed boxes. Let $D$ be the number of distinct named boxes. On $\{C_\theta\ge1\}$ the portfolio contains $\theta$. On the miss event $\{C_\theta=0\}$, the target is one of the $M-D$ unnamed boxes, which carry equal conditional posterior mass, so the conditional rescue probability is $(N_M-D)/(M-D)$. This is the $c=0$ case of \cref{eq:fill-identity}:
\[
G_{\mathrm{portfolio}}
=G_{\priv}+\E\!\left[\1\{C_\theta=0\}\,\frac{N_M-D}{M-D}\right].
\]
Conditional on the miss event, $D$ is the number of occupied boxes when $N_M$ balls are placed uniformly over $M-1$ boxes. Occupancy concentration ($\E[D]=(M-1)[1-(1-\tfrac{1}{M-1})^{N_M}]$ with variance $O(M)$) gives $D/M\to1-e^{-\alpha}$ in probability, hence
\[
\frac{N_M-D}{M-D}
\;\longrightarrow\;
\tau_\alpha
=\frac{\alpha-(1-e^{-\alpha})}{e^{-\alpha}}
=1-(1-\alpha)e^{\alpha}
\qquad\text{in probability},
\]
where $\tau_\alpha\in(0,1)$ because $e^{-\alpha}>1-\alpha$ for $\alpha>0$: duplicated reports leave free slots asymptotically. The ratio is bounded by one, so bounded convergence together with $\Prob(C_\theta=0)\to e^{-\alpha r}$ yields
\[
G_{\mathrm{portfolio}}
\longrightarrow
\bigl(1-e^{-\alpha r}\bigr)+e^{-\alpha r}\bigl[1-(1-\alpha)e^{\alpha}\bigr]
=1-(1-\alpha)\,e^{-\alpha(r-1)}.
\]
Finally, $(1-\alpha)e^{\alpha}$ is strictly decreasing on $(0,1)$ from the value one at $\alpha=0$, so $(1-\alpha)e^{\alpha}<1$ and the portfolio limit strictly exceeds the private limit for every $\alpha\in(0,1)$. \qed

\emph{Symmetric equal-split market.} Write the scaled posterior weight of box $b$ as $W_b=M\pi_b=r^{C_b}/(Z_M/M)$ with $Z_M=\sum_b r^{C_b}$, and recall from \cref{prop:scaling} the notation $f_j=e^{-\alpha}\alpha^j/j!$, $\psi_j=r^je^{-\alpha(r-1)}$, $g_j=f_j\psi_j=e^{-\alpha r}(\alpha r)^j/j!$, and $h(x)=(1-e^{-x})/x$.

\emph{Step 1: weighted occupancy law of large numbers.} For a false box, $\E\bigl[r^{C_b}\bigr]=\bigl(1+(r-1)q_M\bigr)^{N_M}\to e^{\alpha(r-1)}$ and $\E\bigl[r^{2C_b}\bigr]=\bigl(1+(r^2-1)q_M\bigr)^{N_M}\to e^{\alpha(r^2-1)}<\infty$. For two false boxes $a\neq b$, each clue lands on at most one of them, so $\E\bigl[r^{C_a}r^{C_b}\bigr]=\bigl(1+2(r-1)q_M\bigr)^{N_M}$, and a direct expansion shows the covariance is $O(1/M)$ times a bounded factor. Chebyshev's inequality then gives $Z_M/M\to e^{\alpha(r-1)}$ in probability, so $W_b\to\psi_j$ in probability for a box with $C_b=j$; the same second moments give, for the truncated weighted sums, $M^{-1}\sum_b r^{C_b}\1\{C_b\ge J\}\to e^{\alpha(r-1)}\sum_{j\ge J}g_j$ in probability, with the tail uniformly small in $J$ because $\sum_j g_j=1$ (the identity $f_j\psi_j=g_j$ is the exponential tilting of a Poisson law).

\emph{Step 2: convergence of the budget equation.} At the finite anonymous equilibrium of \cref{prop:waterfill}, an active box satisfies $W_b\,\phi(s_b)=\Lambda_M:=M\lambda_M$, and the budget constraint $\sum_b s_b=1$ reads $M^{-1}\sum_b x_b=N_M/M$ with $x_b=N_Ms_b$. Since $(1-x/N)^N\to e^{-x}$ uniformly on compact $x$-sets, $\phi(x/N)\to h(x)$ uniformly on compacts, with $h$ continuous and strictly decreasing from $h(0)=1$. Define the limit budget function
\[
B(\Lambda)=\sum_{j\,:\,\psi_j>\Lambda} f_j\,h^{-1}\!\bigl(\Lambda/\psi_j\bigr).
\]
Because $h(x)\le 1/x$, every active class obeys $x_j\le\psi_j/\Lambda$, so $B(\Lambda)\le\Lambda^{-1}\sum_jg_j=1/\Lambda$ and the series converges locally uniformly; $B$ is therefore continuous, strictly decreasing where positive, with $B(0^+)=\infty$ and $B(\infty)=0$, and \cref{eq:market-limit} has a unique root $\Lambda$. The same bound $x_b\le W_b/\Lambda$ controls the high-count contribution to the finite budget uniformly in $M$ by Step 1, so the finite (random) budget functions converge to $B$ in probability at every fixed $\Lambda$. Strict monotonicity then squeezes the equilibrium value: for any $\Lambda'<\Lambda<\Lambda''$, eventually $B_M(\Lambda')>N_M/M>B_M(\Lambda'')$ with probability tending to one, so $\Lambda_M\to\Lambda$ in probability.

\emph{Step 3: from values to discovery.} The welfare identity of \cref{prop:welfare} gives conditional discovery $N_M\lambda_M=(N_M/M)\Lambda_M\le1$, so $\Lambda_M\le M/N_M$ is uniformly bounded, and bounded convergence yields $G_{\mkt}^{\mathrm{sym}}=\E[N_M\lambda_M]\to\alpha\Lambda$.

\emph{Step 4: the tilted form.} Multiplying the budget equation by $\Lambda$ and using $\Lambda=\psi_jh(x_j)$ on active classes,
\[
\alpha\Lambda
=\sum_j f_jx_j\,\psi_jh(x_j)
=\sum_j f_j\psi_j\bigl(1-e^{-x_j}\bigr)
=\sum_j g_j\bigl(1-e^{-x_j}\bigr):
\]
in the limit the market covers the target's own count class $j$ with probability $1-e^{-x_j}$, where the target's count is $\mathrm{Poisson}(\alpha r)$ and the number of occupants of any fixed box is asymptotically $\mathrm{Poisson}(x_j)$. Canonically $\Lambda=1.094021$ lies strictly between $\psi_1=0.948$ and $\psi_2=3.555$, so the limit market abandons every box with fewer than two reports. \qed

\emph{Abundant evidence (\cref{rem:abundant}).} Hold $p$ fixed. By the multiplicative Chernoff bound, $\Prob(C_\theta\le N_Mp/2)\le e^{-N_Mp/8}\to0$. Each false count is $\mathrm{Binomial}(N_M,q_M)$ with mean $N_Mq_M\to\alpha(1-p)$, so the standard upper-tail bound gives $\Prob(C_b\ge N_Mp/2)\le\bigl(2eq_M/p\bigr)^{N_Mp/2}$, whose base is $O(1/M)$ while the exponent grows linearly in $M$; the union bound over the $M-1$ false boxes therefore vanishes. With probability tending to one the target is the unique plurality winner, so $G_{\cons}\to1$. The same argument applies whenever $\lambda_M=N_Mp_M$ satisfies $\lambda_M/\log M\to\infty$: the target count concentrates above $\lambda_M/2$ while $M\exp\bigl[-(\lambda_M/2)\log(\lambda_M/2e\alpha)\bigr]\to0$. \qed

\section{Computational verification and status of claims}
\label{app:computation}

The accompanying script \texttt{verify\_shared\_discovery\_v8\_6.py}, available with all data tables and figures at \texttt{github.com/yoheinakajima/shared-discovery-paradox}, performs the following checks.

\begin{enumerate}
\item It verifies the reciprocal-share identity and the water-filling equilibrium on $200$ random posteriors, including dominant-posterior pure corners.
\item It verifies the welfare identity $G_{\mkt}^{\mathrm{sym}}=N\lambda$ posterior by posterior.
\item It reproduces the uniform-posterior formula and the $e/(e-1)$ limit.
\item It verifies the tight $2-1/N$ family and the sole-rescue pure-equilibrium characterization.
\item It exactly enumerates the $67$ canonical report-count classes for every copying value, verifies that class probabilities sum to one, and reproduces the private-search closed form.
\item It reproduces the independent pooled frontier, the canonical market value $0.599099252439$, the full copying table, and the crossover $c^*=0.788461656521$.
\item It verifies the blind-control values $8/16=0.50$ and $1-(1-1/16)^8=0.403281$, and that the blind-sentinel hybrid family is maximized at zero sentinels.
\item It computes the average-quality column exactly ($Q_{\mkt}^{\mathrm{sym}}=0.344136160273$) and verifies the redundancy identity $NQ-G=\E[(K-1)^{+}]$ class by class for the market and in closed form for the other protocols.
\item It evaluates the proportional sparse-evidence limits at $\alpha=1/2$, $r=3.75$, checks the fill-identity form of the portfolio limit, and runs a seeded Monte Carlo convergence diagnostic at $M\in\{16,64,256\}$.
\item It cross-checks a closed-form expression for the common-cue likelihood coefficients against the direct enumeration on three $(M,N,p)$ configurations, and validates a batched equilibrium solver against the scalar solver on random posteriors.
\item It solves the market limit equation \cref{eq:market-limit} to high precision ($\Lambda=1.094021073740$, $G=0.547010536870$), verifies the tilted-form identity to $10^{-11}$, confirms that the active support starts at two reports, and runs a seeded Monte Carlo convergence diagnostic for the finite market at $M\in\{16,64,256,1024\}$.
\item It computes the partial-conformity curve of \cref{rem:conformity} exactly for $a=0,\ldots,8$, verifies the endpoints $G(0)=G_{\priv}$ and $G(8)=G_{\cons}$, and confirms the strict monotone decline.
\item It verifies the limit gap decomposition $G_{\priv}-G_{\mkt}^{\mathrm{sym}}\to g_1+\sum_{j\ge2}g_je^{-x_j}=0.287541+0.012094$ to $10^{-11}$, and runs a seeded fixed-$p$ Monte Carlo confirming the abundant-evidence remark (consensus $0.385\to0.947\to1.000$ at $M\in\{16,64,256\}$).
\item It scans the $72$-combination comparative-statics grid, verifies that the market--private gap changes sign at most once in every combination, and writes the crossover locations to \texttt{grid\_crossings.csv}.
\item It generates every figure and the underlying CSV tables.
\end{enumerate}

\paragraph{Analytically established.}
The value-of-information and fixed-budget dominance results; posterior-count ranking under independent symmetric signals; private coverage dominance over one-action consensus; the weak monotone harm of partial conformity by set inclusion; the blind-control values; the redundancy identity $NQ-G=\E[(K-1)^{+}]$; the proportional sparse-evidence limits, including the market cell of \cref{eq:market-limit} and the abundant-evidence consistency remark; the reciprocal-share identity; the exact-potential and payout identities for equal sharing; the anonymous water-filling equilibrium including its pure corner; $G_{\mkt}^{\mathrm{sym}}=N\lambda$; the exact mixed $\PoA=2-1/N$ upper bound and tight family; full pure-strategy implementation under sole-rescue; channel collapse and the closed-form private discovery function; failure of count ranking under dependence by explicit counterexample; the fill identity; and strict monotonicity of the centralized planner gain.

\paragraph{Exactly enumerated or numerically solved for the canonical model.}
Consensus, private, planner, and action-budget frontier values; the partial-conformity curve $G(a)$; recovery budget; expected distinct-action counts; the anonymous market value and its average-quality column; the common-cue table; and the market/private crossover. Enumeration over report classes is exact up to floating-point evaluation of finite formulas; one-dimensional equilibrium and crossover equations are solved to high numerical precision.

\paragraph{Computational patterns, not general theorems.}
The canonical herding discount vanishes for action budgets $L\ge3$ on the tested copying grid; the canonical recovery budget is nonincreasing over that grid; and the market/private gap changes sign at most once on the tested $72$-combination grid---exactly once in $52$ combinations and never in the remaining $20$, where private report-following stays ahead through $c=0.98$. These patterns should not be generalized without additional analysis.

\bibliographystyle{plainnat}
\bibliography{shared_discovery_v8}

\end{document}